\documentclass[sigconf,nonacm,balance=false]{acmart}
\usepackage{tabularx}
\usepackage{tikz}
\usetikzlibrary{arrows.meta,positioning,shapes.geometric,calc}
\usepackage{enumitem}
\usepackage{seqsplit}
\usepackage{placeins}
\usepackage{float}
\usepackage[safe]{pbalance}
\newcolumntype{Y}{>{\raggedright\arraybackslash}X}
\newcolumntype{L}[1]{>{\raggedright\arraybackslash}p{#1}}
\definecolor{reftitleblue}{RGB}{0, 79, 170}

\definecolor{revcolor}{RGB}{0, 0, 0}
\newcommand{\revadd}[1]{#1}
\newenvironment{revaddblock}{\par}{\par}

\theoremstyle{plain}
\newtheorem{theorem}{Theorem}
\newtheorem{proposition}{Proposition}

\theoremstyle{definition}
\newtheorem{definition}{Definition}
\newtheorem{assumption}{Assumption}
\theoremstyle{remark}
\newtheorem{remark}{Remark}

\AtBeginDocument{%
  \providecommand{\showDOI}[1]{\unskip}%
  \providecommand{\showURL}[1]{\unskip}%
  \renewcommand{\showDOI}[1]{\unskip}%
  \renewcommand{\showURL}[1]{\unskip}%
}

\begin{document}

\title{Context Is Not Authority: Structured Runtime Governance for Financial Market Agents}

\author{Rui Tang}
\affiliation{%
  \institution{OpenAsk}
  \country{}}
\email{vino@openask.me}

\author{Qiangqiang Liu}
\authornote{Corresponding authors: Qiangqiang Liu and Yichi Zhang.}
\affiliation{%
  \institution{Binance}
  \country{}}
\email{codi.l@binance.com}

\author{Yichi Zhang}
\authornotemark[1]
\affiliation{%
  \institution{New York University}
  \city{New York}
  \state{New York}
  \country{USA}}
\email{zhangyichi@stern.nyu.edu}

\author{Youwei Yang}
\affiliation{%
  \institution{Xiamen University}
  \city{Xiamen}
  \country{China}}
\email{yy783@cornell.edu}

\author{Xi Chen}
\affiliation{%
  \institution{New York University}
  \city{New York}
  \state{New York}
  \country{USA}}
\email{xc13@stern.nyu.edu}

\author{Chen Dong}
\affiliation{%
  \institution{Bank of Hebei}
  \country{China}}
\email{chendong7281@outlook.com}

\renewcommand{\shortauthors}{Tang et al.}

\begin{abstract}
Financial agents can turn correct context into an unauthorized effect: a customer-facing commitment, trade, or deployed policy. We present SAGE-Fin, a finance-specific authority-handoff contract that makes the proposed effect---not merely its text---the object of runtime control. SAGE-Fin compiles proposals into typed, adapter-bound candidates; preserves missing or stale institutional obligations as coverage debt; contracts permissible authority under current market, account, policy, and dialogue state; and requires an exact-artifact receipt whose nominal type matches the consuming response, execution, or policy adapter. Evidence and workflow progress therefore cannot substitute for effect authority, and prior authorization is rechecked after state change. Across an authored 616-case catalog, five deterministic specifications yield 3,080 outputs; a label-isolated harness obtains 616/616 binary reference--prototype parity, including 3/3 named response-gate fixtures, while 22 tests cover selected paths. These results establish executable conformance, not independent safety accuracy. Separately, SAGE-Fin's response gate processed real customer-facing production requests at a confidential digital-asset platform. A formal post-deployment review by an operational team organizationally independent of the implementation team reached a strongly positive conclusion on practical usefulness and workflow fit; end-user feedback was likewise strongly positive. Only the review's independence, stakeholder classes, assessed dimensions, and directional conclusion are disclosure-approved, so this is implementation-independent qualitative field corroboration rather than an aggregate effect estimate. Three distinct, de-identified predecessor failures, whose non-interception a second analyst independently confirmed (0/3), ground repeated-emission drift, stale account evidence, and missing escalation state without estimating prevalence or treatment effect.
\end{abstract}

\keywords{financial AI agents, runtime governance, AI in finance, agent safety, auditability, risk control, receipts}

\maketitle

\section{Introduction}

Financial AI systems are moving from passive analytics toward market agents that observe signals, retrieve context, reason over portfolio state, draft recommendations, propose risk-control changes, and prepare executable instructions. This shift extends broader FinTech and AI-in-finance trends in platform-mediated financial services, model-driven decision support, and automated financial workflows \cite{gomber2018fintech,cao2020ai,bajwa2022fintech}. In such systems, a failure can occur even when retrieved context is relevant, model output is fluent, and the underlying fact is correct. A momentum signal may be hardened into a trade, a stale quote may become a fee commitment, a diagnostic risk note may become personalized advice, or a dialogue-derived risk-control instruction may become an enforceable policy without deterministic validation. The common ingredient in these failures is natural language crossing an authority boundary: a fluent claim outruns the structured verification that would normally constrain it.

We study this failure mode as a runtime authority problem. Hallucination, factual error, and forecast error remain first-order risks in financial agents; SAGE-Fin addresses a complementary question that persists even after factuality and prediction checks succeed: whether a financial market-agent artifact has authority to be expressed, recommended, deployed, executed, or blocked under current state.

We call this the context-to-authority failure. Context includes market signals, portfolio state, retrieved documents, model analysis, previous dialogue, risk-control instructions, and operational metadata. Authority is narrower: the runtime permission to make a claim, issue a recommendation, reserve risk budget, deploy a policy, or send an instruction to a broker, exchange, or paper-trade adapter. Context can inform authority, but context is not authority.

SAGE-Fin addresses this gap through a finance-specific effect-boundary handoff contract. Its authorization unit joins the re-derived typed candidate, institutional witness/debt state, exact artifact, current state, and nominal receipt accepted by the consuming adapter. This is the paper's central research object: a content check may validate support, and a generic authorization substrate may encode predicates, but neither specifies this finance-domain object or the transfer of authority across response, execution, and policy seams. Missing or stale witnesses remain explicit debt; a dynamic controller contracts permissible authority; and only a valid \textsf{commit}, \texttt{execAuth}, or \texttt{deploy} receipt bound to the exact artifact can authorize the corresponding effect.

Tang et al. introduce CBEA+LCV to activate contract-bounded evidence and lexicographically validate structured language commitments before realization~\cite{tang2025recall}. SAGE-Fin governs an adjacent downstream boundary: whether a particular response, execution, or policy adapter may consume an exact authority-bearing artifact under mutable state. The mechanisms are complementary rather than interchangeable. An upstream validation result can discharge a registered obligation, but it is neither a downstream receipt nor permission to cause an effect.

The resulting contribution is a mechanism stack: a failure mode, a typed authority object, runtime soundness conditions, and executable mechanism evidence. Concretely, we make four contributions:
\begin{enumerate}
    \item We identify the \emph{context-to-authority} failure mode for financial agents: natural-language context can be relevant, fluent, and factually correct while still lacking runtime authority for the current claim, recommendation, policy, or execution adapter.
    \item We introduce SAGE-Fin, a finance-specific effect-boundary authority contract whose authorization unit joins a typed candidate, institutional witness/debt state, exact artifact, current state, and nominal adapter receipt. Under complete mediation, this composition yields receipt non-substitution, consumption-time revocation, and typed recovery.
    \item We formalize typed, non-substitutable, state-contingent receipts and prove trace-level authority non-amplification: non-consumption events cannot create governed effects; each effect requires a prior exact-artifact adapter receipt valid under current state; and progress or wrong-adapter receipts cannot substitute for effect authority.
    \item We evaluate the mechanism with 616 authored cases and 3,080 diagnostic outputs, 616/616 catalog-wide parity including 3/3 named prototype response-gate fixtures, and 22 tests; separately, we distinguish this public conformance evidence from later production deployment, a strongly positive formal review independent of implementers, strongly positive end-user feedback, and three distinct predecessor failures with independently confirmed 0/3 interception.
\end{enumerate}

The claim is intentionally scoped. SAGE-Fin does not decide whether a strategy is profitable. It does not establish causal alpha. It does not discover all validators autonomously. It does not certify regulatory compliance across jurisdictions. Its claim is that the runtime can determine whether a proposed financial action has authority under declared witnesses, validators, policies, receipts, and current state. This scope is closer to model-risk governance, auditability, and accountability mechanisms than to return prediction \cite{fedocc2011sr117,nist2023airmf,kroll2017accountable,raji2020closing}.

\section{Related Work}

\textbf{AI in finance and FinTech.} AI-in-finance research studies credit scoring, fraud detection, robo-advising, market prediction, risk surveillance, customer analytics, and financial automation \cite{cao2020ai,finra2020ai}, and FinTech research emphasizes that digital platforms alter interfaces, products, intermediation, and operational governance \cite{gomber2018fintech,bajwa2022fintech}. The recent generative-AI wave pushes language models closer to the runtime decision boundary that SAGE-Fin governs: finance-tuned LLMs such as BloombergGPT \cite{wu2023bloomberggpt}, multi-agent trading systems such as TradingAgents \cite{xiao2024tradingagents}, LLM-driven investment-management agents \cite{saha2025investment}, and the broader literature surveyed by Eisfeldt and Schubert \cite{eisfeldt2025genai} place fluent NL outputs directly upstream of order generation, portfolio actions, and customer-facing claims. SAGE-Fin is complementary to this work: it does not introduce a new alpha model or financial prediction task, but addresses the runtime boundary between financial context and authorized market-agent action.

\textbf{Model risk, governance, and auditability.} Financial model-risk management emphasizes validation, documentation, monitoring, policies, controls, and accountability for model use \cite{fedocc2011sr117}. Broader AI governance and accountability work asks how automated systems can be audited, reconstructed, and governed \cite{nist2023airmf,kroll2017accountable,raji2020closing}. Recent guidance specifically targets generative AI: the NIST Generative-AI Profile \cite{nist2024genai} and the EU AI Act \cite{eu2024aiact} impose transparency, oversight, incident-reporting, and audit obligations on high-risk AI deployed in financial services. SAGE-Fin extends these concerns to agentic runtime artifacts: the governed object is not only a trained model or an output log, but the authority-bearing transition from proposal to claim, policy, or execution.

\textbf{Financial conduct and reliance.} Retail financial communication rules and best-interest regimes constrain promissory, misleading, and personalized recommendation language \cite{finra2210,sec2019regbi}. Human-automation studies also show that users may over-rely on automated guidance when trust is miscalibrated \cite{parasuraman1997humans}. SAGE-Fin treats such reliance risks as authority-boundary problems: a diagnostic signal, retrieved context, or conversational statement should not automatically become a recommendation, remedy, or executable instruction.

\textbf{Structured language commitment control.} CBEA+LCV activates a bounded evidence set and validates candidate language commitments before surface realization \cite{tang2025recall}. That boundary is upstream of SAGE-Fin's exact-artifact handoff: linguistic support or a pre-realization validation result may satisfy a declared validator obligation, but cannot substitute for response, execution, or deployment authority. CBEA+LCV is adjacent prior work; the diagnostic language-control proxy evaluated below does not reproduce its selector or full pipeline.

Table~\ref{tab:positioning} makes this boundary explicit by separating what prior mechanisms already supply from SAGE-Fin's scoped contribution.

\begin{table*}[!t]
\caption{Positioning by mechanism boundary, not a claim of feature or expressiveness dominance.}
\label{tab:positioning}
\centering
\small
\setlength{\tabcolsep}{4pt}
\begin{tabularx}{\textwidth}{L{0.17\textwidth}L{0.34\textwidth}Y}
\toprule
Prior mechanism & What it already supplies & SAGE-Fin scoped contribution \\
\midrule
Complete mediation \cite{saltzer1975protection} & Recheck each mediated access. & Declare response, execution, and policy effects as finance-specific consumption points and bind each to a re-derived candidate. \\
ABAC \cite{hu2014abac} & Attribute-based decisions over subjects, objects, operations, and environments. & Compile a proposed financial effect into a typed candidate and materialize absent or stale institutional obligations as coverage debt. \\
UCON \cite{park2004ucon} & Mutable attributes and authorization during ongoing use. & Bind an exact artifact to a nominal adapter receipt, then revalidate financial state and feed reconciliation into later authority. \\
Macaroons \cite{birgisson2014macaroons} & Delegated credentials attenuated by contextual caveats. & Separate progress receipts from \textsf{commit}, \texttt{execAuth}, and \texttt{deploy}; one receipt type cannot stand in for another adapter's authority. \\
AgentSpec \cite{wang2026agentspec} & Domain-agnostic trigger--predicate--enforcement rules for LLM agents. & Supply the finance lifecycle and objects linking candidate, witness, debt, exact artifact, receipt, adapter outcome, and reconciliation. \\
CBEA+LCV \cite{tang2025recall} & Upstream evidence activation and pre-realization language-commitment validation. & Gate downstream exact-artifact consumption under mutable state; an upstream pass may satisfy an obligation but cannot itself authorize an effect. \\
\bottomrule
\end{tabularx}
\end{table*}

\textbf{Agentic systems, complete mediation, and usage control.} Recent agent work emphasizes tool use and interaction with external environments \cite{yao2023react}; AgentHarm and Agent Security Bench measure harmful and adversarial agent behavior \cite{andriushchenko2025agentharm,zhang2025asb}, while AgentSpec provides a generic runtime-constraint language for LLM agents \cite{wang2026agentspec}. SAGE-Fin also has classical security antecedents. Complete mediation requires each access to be checked \cite{saltzer1975protection}; ABAC evaluates attributes of subjects, objects, operations, and environments \cite{hu2014abac}; UCON extends authorization to mutable attributes and ongoing use \cite{park2004ucon}; and macaroons attenuate delegated authority through contextual caveats \cite{birgisson2014macaroons}. These mechanisms can encode individual predicates, but leave the finance-domain authorization object and cross-stage handoff to the application. SAGE-Fin contributes that compositional object and lifecycle: proposed effect, typed candidate, institutional obligations, exact artifact, current state, and nominal adapter receipt are joined at consumption. This is not a greater-expressiveness claim; it is the domain contract required to prevent evidence progress, workflow progress, or prior authorization from silently becoming a new financial effect.

\textbf{Crypto-asset platform risk.} Digital-asset platforms add fast-changing chain state, irreversible transfers, custody exposure, liquidity fragmentation, and cross-border retail access. International policy work highlights market integrity, investor protection, custody, operational resilience, and retail distribution risks in crypto-asset markets \cite{fsb2023crypto,iosco2023cda}, and the EU MiCA regime \cite{eu2023mica} now imposes binding authorization, conduct-of-business, and operational-resilience requirements on crypto-asset service providers across all twenty-seven member states. SAGE-Fin is designed for this environment: it makes platform state, policy state, evidence freshness, and execution authority explicit rather than assuming that retrieved context is sufficient.

\FloatBarrier

\section{Context-to-Authority Failures}
\label{sec:failures}

\subsection{A Motivating Pre-SAGE Production Failure}
\label{sec:motivating-case}

\begin{revaddblock}
Before formalizing the failure taxonomy, we ground the problem in one of three distinct, de-identified production failures observed in the predecessor response workflow before SAGE-Fin. This historical episode is separate from SAGE-Fin's later customer-facing deployment at the same confidential digital-asset setting. The episode is revealing because nothing on the surface changes: the same sourced, policy-compliant answer is acceptable once and unauthorized later. A user contacts a financial-services LLM customer service agent (e.g., on a brokerage, exchange, or wealth-management platform) and asks how to reset their identity verification. The agent's retrieval layer returns a standard operating procedure (SOP) document stating that reset is unavailable in restricted jurisdictions, and the agent emits a sourced response reflecting that SOP. Several turns later, after the user has discussed an unrelated account-recovery question, the user re-asks ``how to reset identity verification.'' The predecessor agent retrieves the same SOP and emits an effectively unchanged response; that repeated emission is the observed failure. Under a retrospective SAGE-Fin analysis, the earlier authority could not simply be reused, and the repeated response would require a separately authorized route. This is a motivating failure reconstruction, not a logged SAGE-Fin intervention or counterfactual treatment estimate.

The two responses are effectively unchanged. The retrieval and factual content remain correct. What changes between turns is the candidate type: at turn~$t_1$ the agent is responding to a first inquiry, where a sourced SOP response is authorized; at turn~$t_k$ the agent is responding after intervening dialogue and an unresolved repeat, where the earlier authority decision no longer applies. Retrieval and factual-correctness checks do not represent this distinction. The runtime must recompile the candidate and evaluate the currently authorized route.

The pattern can arise beyond customer service whenever unchanged context is consumed under a changed authority state. A stale fee quote retrieved correctly at $t_1$ becomes a misleading fee commitment when emitted at $t_k$ after fee schedules have been republished. A diagnostic risk note retrieved with correct citation becomes personalized advice when emitted into a conversation where the user has just disclosed their portfolio. A budget reservation receipt that legitimately holds risk for an intended order at $t_1$ becomes execution permission if the runtime fails to distinguish receipt types at $t_k$. A verbal risk-control instruction (``relax drawdown checks for institutional accounts after-hours'') is fluent and well-formed but underspecifies the drawdown window, the institutional-flag witness, the freshness cap, and the rollback path; nevertheless, an LLM agent may compile it into a deployable policy.

The common ingredient is that \emph{context} (retrieval results, prior turns, source documents, model confidence) is being treated as \emph{authority} (the runtime permission to emit a claim, to commit risk, to send an instruction to a venue, to deploy a policy). Sections~\ref{sec:failure-patterns}--\ref{sec:hmm} show how SAGE-Fin separates these two notions; Section~\ref{sec:eval-empirical} illustrates the separation through customer-facing scenarios.
\end{revaddblock}

\subsection{Recurring Failure Patterns}
\label{sec:failure-patterns}

Market agents combine multiple sources of information, most of which arrive in or are mediated by natural language: model-generated commentary, retrieved documents, dialogue with users, and verbal risk-control instructions. Hallucination and bad forecasts are obvious failures. The following six failures are harder because text may be factually correct, well sourced, and useful yet remain unauthorized for the action or claim it supports.

\textbf{Signal hardening.} A soft signal becomes a stronger claim than the evidence permits. \revadd{\emph{In trading agents}: a momentum signal becomes ``buy now'' or ``this will be profitable'' rather than a bounded diagnostic statement. \emph{In customer-facing agents}: a retrieved analyst comment becomes a personalized recommendation without the suitability checks that a recommendation triggers under FINRA Rule 2210~\cite{finra2210} or SEC Reg BI~\cite{sec2019regbi}. \emph{In risk-control workflows}: a research note suggesting elevated risk in a sector becomes a hard exposure cap before peer review.}

\textbf{Evidence staleness.} A quote, fee estimate, liquidity condition, chain status, or portfolio snapshot is used after its validity window. \revadd{\emph{Concretely}: a fee schedule retrieved at $t_1$ is emitted to a user at $t_k$ after a fee-promotion period has ended; a chain status retrieved during a brief mempool quiet period is used to commit a withdrawal time at $t_k$ when the mempool has since congested; a portfolio snapshot taken pre-rebalance is used to compute available margin post-rebalance.} The model retrieved the right object but used it outside its authority window.

\textbf{Coverage debt hiding.} Required witnesses are absent, but the generated response masks missing evidence with confident prose. \revadd{\emph{In execution adapters}: missing slippage assumptions or walk-forward backtest evidence is masked by ``backtest results suggest...'' phrasing. \emph{In customer-facing adapters}: missing user-jurisdiction or user-tier witnesses are masked by emitting a generic SOP whose preconditions the agent has not verified. \emph{In risk-control}: missing rollback-path or fixture-replay evidence is masked by ``policy will activate on threshold breach'' without specifying what happens if the threshold-breach witness is itself stale.}

\textbf{Receipt misuse.} A system records a budget reservation or progress receipt and later treats it as execution permission. \revadd{\emph{In trading}: an order management system reserves margin for an intended order and the reservation receipt is later cited as authorization to send the order to the venue, even though the reservation does not include the live-quote and risk-limit checks that final authorization requires. \emph{In customer support}: a prior clarification decision is later cited as authority to recommend a withdrawal path, even though no withdrawal-eligibility witness has been activated.} Entering a workflow is not final execution or response authority.

\textbf{Dialogue-to-policy deployment.} A human or model conversation produces risk-control intent, but the intent is ambiguous. \revadd{\emph{In risk-control workflows}: a verbal instruction ``relax drawdown checks for institutional accounts after-hours'' leaves open which drawdown window (rolling, daily, intraday), which witness substantiates the institutional flag, what fallback applies if the witness is stale, and what the rollback path is on a false trigger.} Different LLM agents may map the same phrase into different thresholds, scopes, or effects. \revadd{\emph{Especially relevant in crypto-asset venues}: where MiCA Title V requires conduct-of-business and operational resilience controls~\cite{eu2023mica}, ambiguous dialogue cannot satisfy the deterministic-policy requirement.} Dialogue can propose policy, but it cannot directly become deployed enforcement.

\revadd{\textbf{Authority drift across turns.} A candidate compiled with valid authority at turn $t_1$ is re-emitted at turn $t_k$ ($k > 1$) without recompilation, even though prior dialogue has changed the candidate type. \emph{In trading agents}: a buy recommendation valid pre-news is restated post-news without re-deriving the candidate. \emph{In customer-facing agents}: an SOP retrieval valid for a first inquiry is re-emitted to a user who has signaled the SOP is inapplicable, with no candidate-type re-derivation (Section~\ref{sec:motivating-case}). The same surface response carries different authority obligations in different conversational contexts; without per-turn recompilation through $\Gamma$ (Definition~\ref{def:candidate}), the runtime treats authority as cached when it should be re-derived.}

The six observed patterns map into the four threat classes used below. Signal hardening and coverage-debt hiding are two forms of adversary~A: fluent language requests more authority than declared evidence supports. Evidence staleness and authority drift across turns are forms of adversary~B: an earlier context or authorization is consumed after relevant state changes. Receipt misuse is adversary~C, and dialogue-to-policy deployment is adversary~D. This grouping preserves the finer failure taxonomy while giving each runtime defense a single threat-model locus.

These failures are especially relevant in financial markets because small wording and state changes can have material consequences. A statement about liquidity, fee, position size, leverage, withdrawal status, or trade suitability can shape user behavior even if no order is executed. Conversely, a system that touches execution must distinguish diagnostic context from authority to send an instruction to an adapter.

\revadd{\textbf{Applicability across financial domains.} The same interface can be instantiated in equity brokerage, derivatives quoting, wealth management, lending, and onboarding: each domain supplies its own candidate types, witness registry, freshness horizons, and forbidden effects. For example, a brokerage recommendation requires suitability and communication witnesses, while a derivatives quote requires a fresh option-chain snapshot and volatility bounds. These are design instantiations, not empirical cross-domain validation; the field evidence in Section~\ref{sec:eval-empirical} is limited to a customer-response gate.}

\section{Threat Model and Scope}
\label{sec:threat}

SAGE-Fin assumes a market-agent runtime that can observe proposals, candidate artifacts, declared witnesses, registry state, budget state, receipt issuance, adapter calls, and reconciliation outcomes. The threat model is dominated by natural-language inputs: the model or agent may generate fluent but overcommitted text, request an authority level stronger than its evidence supports, use stale context, omit required witnesses, or pass the wrong receipt type to a downstream adapter. We organize these threats as four NL-driven adversaries, each mapped to a SAGE-Fin defense from Sections~\ref{sec:formalization}--\ref{sec:hmm}.

\textbf{(A) Signal-hardening adversary.} Fluent NL converts a soft signal into a hard commitment (``momentum is strong'' becomes ``buy now''), requesting a cap stronger than the underlying evidence supports. \emph{Defense:} typed candidates make the requested effect explicit; coverage debt contracts authority; and the gate checks forbidden effects on the exact consumed artifact (Definitions~\ref{def:candidate}--\ref{def:vld}).

\textbf{(B) Stale-context adversary.} The adversary delays consumption of a receipt---or relies on context past its validity window---so that consumption state no longer matches issuance state. \emph{Defense:} the gate re-derives the candidate, evaluates bound state and expiry predicates, and compares the request with current controller authority; the causal filtration ensures any regime test uses only information available at $t$.

\textbf{(C) Receipt type confusion and cross-artifact reuse.} The adversary submits a progress or wrong-adapter receipt as authority, or reuses a valid receipt against another artifact, subject, or scope. \emph{Defense:} nominal receipt types, trusted issuance, and candidate, exact-artifact, subject, and scope bindings (Definition~\ref{def:vld}; Theorems~\ref{thm:trace-safety} and~\ref{thm:nonsub}). Same-artifact one-shot replay remains outside the current prototype.

\textbf{(D) Dialogue-policy injection adversary.} Ambiguous NL in dialogue is used to introduce risk-control intent that, if compiled naively, would lower thresholds or enlarge enforcement scope (e.g., ``relax drawdown checks for institutional accounts after-hours''). \emph{Defense:} dialogue is treated as untrusted proposal material (Section~\ref{sec:risk-policy}); enforcement requires deterministic policy IR + validation + fixtures + shadow + a deployment receipt with $H(\pi)$ binding; Theorem~\ref{thm:pol-sound} gives the soundness of the policy enforcement gate.

The system boundary is the runtime gate. SAGE-Fin enforces authority only for actions routed through receipt-checking adapters; external actions that bypass these adapters lie outside the trust boundary and appear in reconciliation as observations (with possible contraction via Proposition~\ref{prop:contraction}). The goal is not to prove that a trade is profitable, optimal, legally sufficient, or globally safe. The goal is to ensure that authority-increasing runtime actions over natural-language inputs are explicit, typed, state-contingent, receipt-bound, and reconstructable.

\section{SAGE-Fin Architecture}
\label{sec:architecture}

SAGE-Fin is a runtime authority control plane. It mediates authority-increasing transitions from market context to financial claims, recommendations, policy deployments, and executable instructions. Figure~\ref{fig:architecture} shows the runtime architecture: proposals enter at the top, the compiler $\Gamma$ produces a typed candidate, witness and validator registries determine coverage debt, and the authority controller emits two classes of receipts whose consumption is gated by type-restricted adapters; reconciliation feeds state back, and the audit ledger feeds promotion back into the registries. Each component is the runtime locus of one of the four NL adversaries enumerated in Section~\ref{sec:threat}: the compiler $\Gamma$ defends against signal-hardening (A) by forcing fluent NL through deterministic compilation; the authority controller and state-contingent receipt validity defend against stale-context (B); the type-restricted adapters realize receipt non-substitutability against type-confusion / replay (C); and the deterministic policy IR plus deployment-receipt pipeline guards dialogue-policy injection (D).

\begin{figure*}[t]
\centering
\begin{tikzpicture}[
  scale=0.85, transform shape,
  font=\small,
  >=Latex,
  node distance=5mm and 8mm,
  box/.style={draw, rounded corners=2pt, align=center,
              minimum height=7mm, inner sep=3pt, font=\small},
  input/.style={box, fill=gray!10},
  proc/.style={box, fill=blue!8},
  reg/.style={box, fill=yellow!18, dashed},
  audr/.style={box, fill=orange!15},
  authr/.style={box, fill=orange!16},
  gate/.style={box, fill=green!15, very thick},
  casegate/.style={box, fill=green!8, dashed, very thick},
  arr/.style={->, thick},
  fb/.style={->, dashed, thick}
]

\node[input] (sig) {market signal};
\node[input, right=of sig] (mod) {model analysis};
\node[input, right=of mod] (dlg) {dialogue intent};

\node[proc, below=8mm of mod] (cmp) {compiler $\Gamma$};

\node[proc, below=of cmp] (cand) {typed candidate $c_t$};

\node[reg, left=14mm of cand] (wr) {witness reg.\ $W_t$\\validator reg.\ $V_t$};

\node[proc, below=of cand] (cov) {coverage debt $D_t$};

\node[reg, right=14mm of cov] (hmm) {HMM regime witness\\$b_t,\ H(b_t)$};

\node[proc, below=of cov, minimum width=42mm] (ctrl) {authority controller\\[1pt]$A_{g,t}=G_g(\xi_t,c_t,b_t)$};

\node[audr, below left=9mm and 6mm of ctrl] (audR)
  {progress receipts\\\scriptsize evidence $\mid$ budget $\mid$ validation\\\scriptsize fixture $\mid$ shadow};
\node[authr, below right=9mm and 6mm of ctrl] (authR)
  {adapter receipts\\\scriptsize commit $\mid$ \texttt{execAuth} $\mid$ \texttt{deploy}};

\node[reg, below=of audR] (ledger) {audit \& promotion ledger};
\node[gate, below=of authR, xshift=-10mm] (gE) {execution adapter (gate)};
\node[gate, below=of authR, xshift=22mm] (gP) {policy adapter (gate)};
\node[casegate, below=of authR, xshift=56mm] (gR) {response adapter\\\scriptsize deployed path};

\node[proc, below=10mm of gP, xshift=8mm] (rec) {reconciliation};

\draw[arr] (sig) -- (cmp);
\draw[arr] (mod) -- (cmp);
\draw[arr] (dlg) -- (cmp);
\draw[arr] (cmp) -- (cand);
\draw[arr] (wr.east) -- (cand.west);
\draw[arr] (cand) -- (cov);
\draw[arr] (wr.south east) -- (cov.north west);
\draw[arr] (cov) -- (ctrl);
\draw[arr] (hmm.west) -- (ctrl.east);
\draw[arr] (ctrl.south west) -- (audR.north);
\draw[arr] (ctrl.south east) -- (authR.north);
\draw[arr] (audR) -- (ledger);
\draw[arr] (authR.south) -- (gE.north);
\draw[arr] (authR.south) -- (gP.north);
\draw[arr] (authR.south) -- (gR.north);
\draw[arr] (gE) -- (rec);
\draw[arr] (gP) -- (rec);
\draw[arr] (gR) -- (rec);

\coordinate (R1) at ([xshift=8mm]current bounding box.east);
\coordinate (R2) at (R1 |- hmm.east);
\draw[fb] (rec.east) -- (rec.east -| R1) -- (R1)
  -- node[right=2pt, font=\scriptsize, align=left]{$\xi_{t+1}$\\feedback} (R2)
  -- (hmm.east);

\coordinate (L1) at ([xshift=-8mm]current bounding box.west);
\coordinate (L2) at (L1 |- wr.west);
\draw[fb] (ledger.west) -- (ledger.west -| L1) -- (L1)
  -- node[left=2pt, font=\scriptsize]{promotion} (L2)
  -- (wr.west);

\end{tikzpicture}
\caption{SAGE-Fin runtime architecture. Inputs are proposals, not authority. The compiler $\Gamma$ produces a typed candidate $c_t$; registries expose coverage debt $D_t$; and the controller consumes current state $\xi_t$ and optional regime witness $b_t$. Progress receipts support audit and promotion. Scoped commitment, \texttt{execAuth}, and \texttt{deploy} receipts are accepted only by response, execution, and policy gates, respectively. Reconciliation updates future state, and the ledger feeds promotion back into the registries.}
\Description{Block diagram of the SAGE-Fin runtime architecture. Inputs feed a compiler, typed candidate, witness and validator registries, coverage debt, an optional regime witness, an authority controller, progress and adapter receipts, response, execution, and policy gates, reconciliation, and feedback loops.}
\label{fig:architecture}
\end{figure*}

The financial execution lifecycle is:

\begin{quote}
\small
market signal / model analysis / dialogue intent $\rightarrow$ typed candidate $\rightarrow$ witness binding $\rightarrow$ coverage diagnostic $\rightarrow$ dynamic authority controller $\rightarrow$ quote $\rightarrow$ reserve $\rightarrow$ confirm reservation hold $\rightarrow$ final authorize $\rightarrow$ authorize execution $\rightarrow$ execute or block $\rightarrow$ reconcile $\rightarrow$ update future authority.
\end{quote}

The risk-control policy lifecycle is:

\begin{quote}
\small
dialogue $\rightarrow$ policy intent $\rightarrow$ policy candidate $\rightarrow$ deterministic policy IR $\rightarrow$ validation $\rightarrow$ fixtures $\rightarrow$ shadow mode $\rightarrow$ deployment receipt $\rightarrow$ runtime enforcement $\rightarrow$ rollback / reconciliation.
\end{quote}

\subsection{Typed Candidates}

A candidate is a structured artifact produced by the deterministic compiler $\Gamma$ from a market signal, model analysis, strategy output, or dialogue-derived intent --- most of which arrive in natural-language form. $\Gamma$ is therefore the trust-boundary transformation from natural language to runtime-checkable structure: any input that cannot be compiled is rejected at this stage rather than silently promoted. The candidate itself is not authority; it only makes requested authority explicit. Candidate fields include a candidate hash, source signal IDs, claim type, requested commitment, instrument or policy scope, required witnesses, forbidden claims, risk intent, and compiler trace.

The candidate type controls required evidence. A fee estimate may require current fee and quote witnesses. A live-trade candidate may require price freshness, risk controls, liquidity and slippage assumptions, walk-forward backtest evidence, out-of-sample evidence, and user authorization. A policy deployment candidate requires deterministic policy IR, passing validation, fixture replay, shadow evaluation, approval metadata, and registry installation.

\revadd{\textbf{Financial instantiations.} A \texttt{spot\_fee\_quote} candidate requires a fresh fee-schedule witness and a user-tier witness; a \texttt{margin\_authorization} candidate requires a portfolio-snapshot witness, a leverage-state witness, and a walk-forward backtest witness; a \texttt{withdrawal\_eligibility} candidate requires a chain-status witness, a user-jurisdiction witness, and a pending-withdrawal-balance witness; an \texttt{sop\_response} candidate (Section~\ref{sec:motivating-case}) requires a user-context witness and a prior-emission-absence witness; a \texttt{drawdown\_policy\_deployment} candidate requires deterministic-IR and validation witnesses, fixture-replay and shadow-mode witnesses, registry-promotion evidence, and an owner-approval witness. The identity-verification-reset inquiry in Section~\ref{sec:motivating-case} compiles into \texttt{sop\_response} on first emission and into the descriptive type \emph{post-clarification SOP response} on a subsequent emission in the same session, with different witness and forbidden-effect sets.}

\subsection{Witness and Validator Registries}

SAGE-Fin separates evidence from authority by requiring promoted witnesses and validators. A model may propose a witness or validator, but proposal status is not sufficient. Only promoted registry entries can satisfy authority requirements.

Example witness types include current price or quote snapshots, fee schedules, transaction-cost assumptions, portfolio exposure snapshots, drawdown and leverage state, liquidity and slippage estimates, walk-forward backtest records, out-of-sample results, risk-control objects, policy registry state, user authorization, and latent regime witnesses.

Validators check freshness, scope, forbidden language, risk controls, policy conflicts, receipt type, and adapter compatibility. The registry makes incompleteness explicit: missing validators do not become silent assumptions; they become coverage debt or unsupported scope.

\revadd{\textbf{Witnesses by adapter type.} Execution-adapter witnesses include the live order book, instrument-level circuit-breaker status, tick-size and lot-size constraints, and the daily-loss-limit register. Policy-adapter witnesses include the deterministic policy IR with bound hash $H(\pi)$, the fixture-replay outcome, the shadow-mode decision-divergence summary, and the owner-approval signature. Customer-facing adapter witnesses include user-jurisdiction status (subject to sanctions and MiCA territorial scope~\cite{eu2023mica}), user-tier status (which controls fee schedules and limit envelopes), prior-emission registers (which detect repeated SOP emissions in the same session), and escalation-trigger witnesses (which fire when the SOP terminal state is reached without resolution). Validators include freshness checks scoped per witness type (chain status: seconds; fee schedules: minutes; portfolio snapshots: a single snapshot per consumption; SOPs: per-session per-candidate-type), scope checks (the consumed instruction must lie within the witness's instrument and venue scope), and policy-conflict checks (the candidate must not collide with an active emergency-halt policy).}

\subsection{Coverage Debt}

Coverage debt is the set of required witnesses or validators that are missing, stale, unpromoted, or invalid for a candidate. It is runtime state, not a footnote. Coverage debt can downgrade commitment level, block execution authorization, or require clarification, repair, escalation, or human approval. SAGE-Fin does not claim complete validator coverage. It prevents missing coverage from being hidden inside generated prose.

\revadd{\textbf{Financial coverage-debt scenarios.} A live-trade candidate that has a fresh price witness but no slippage-assumption witness has \emph{partial} coverage debt: SAGE-Fin downgrades to \texttt{paper\_trade} mode and surfaces the missing assumption in the receipt's downgrade-reason field rather than emitting a live-trade authorization. A fee-quote candidate whose user-tier witness is older than the tier-recompute interval has \emph{stale-witness} coverage debt: SAGE-Fin downgrades the response from a specific fee number to a generic account path. A repeated SOP candidate with no authorized escalation route has \emph{blocking} coverage debt: the gate blocks re-emission, while human handoff requires an independently declared fail-safe. A policy-deployment candidate whose fixture-replay witness has not been produced has \emph{blocking} coverage debt: the deployment receipt is not issued.}

\subsection{Dynamic Authority Controller}

Authority is not a static label; it is state-contingent and adapter-specific. The controller consumes market state, portfolio state, coverage debt, witness freshness, policy registry state, open reservations, receipts, reconciliation history, incident state, and optional regime witnesses. In the reference execution model, allowed modes form the chain
\[
\begin{aligned}
\bot=\text{blocked} &\prec \text{diagnostic} \prec \text{educational}\\
&\prec \text{paper\_trade} \prec \text{live\_small}=\top .
\end{aligned}
\]
The execution adapter combines this mode with bounded size, slippage, freshness, and expiry coordinates. Response and policy adapters use their own cap orders (Definition~\ref{def:lattice}); the prototype does not expose unrestricted live trading.

\subsection{Receipts and Runtime Gates}

Receipts are scoped artifacts. They bind a candidate, exact artifact, subject, evidence and policy state, cap, and validity window. Receipt types are non-substitutable. Evidence, budget, validation, fixture-test, and shadow-mode receipts support audit or promotion but grant no effect authority.

Adapter receipts include a scoped commitment for response emission, \texttt{execAuth} for execution, and \texttt{deploy} for policy enforcement. Each adapter rejects missing, stale, wrong-type, wrong-scope, hash-mismatched, or state-incompatible receipts. A failed gate does not itself authorize a fallback: generic response, contracted execution, or human handoff must be independently declared and satisfy its own predicates.

\revadd{\textbf{Non-substitutability in financial workflows.} On a trading desk, a margin-reservation receipt (\texttt{bud}) issued at $t_1$ does not authorize an order send at $t_k$, because the reservation does not certify that the live-quote and risk-limit witnesses required by the execution adapter are still valid. On a risk-management workstation, a fixture-replay receipt (\texttt{fxt}) confirming that a draft drawdown policy fires correctly on historical stress traces does not authorize the policy adapter to enforce the policy on a live event, because enforcement requires the deployment receipt (\texttt{deploy}) which additionally binds the deterministic policy IR hash $H(\pi)$ and the owner-approval witness. On a customer-service console, a clarification-question receipt does not authorize a withdrawal recommendation, because the recommendation requires a user-eligibility witness and a fresh fee-schedule witness that the clarification step did not capture. In each case, ``progress through the workflow'' is recorded as auditable evidence, but final authorization requires its own typed receipt with its own witness set.}

\subsection{Reconciliation}
\label{sec:reconciliation}

After execution or enforcement, SAGE-Fin reconciles the observed outcome against the authorized artifact. Reconciliation records whether the action obeyed the receipt, underfilled, overfilled, mismatched scope, exceeded slippage bounds, lacked acknowledgement, or triggered incident review. Formally, let $o_t\in\mathcal{O}$ denote the reconciliation outcome and $\xi_t$ the runtime state at consumption. The feedback map
\[
\xi_{t+1}=\mathrm{Reconcile}(\xi_t,o_t)
\]
updates the incident state $I_{t+1}\supseteq I_t$, the portfolio drawdown component of $p_{t+1}$, and the freshness state $\Phi_{t+1}$. The map is monotone with respect to incidents: once an incident enters $I_t$, future authority on the affected scope is bounded above by a contracted cap until the incident is cleared (Proposition~\ref{prop:contraction}).

\revadd{\textbf{Reconciliation in trading vs.\ customer-facing workflows.} In a trading workflow, $o_t$ records fill price, fill quantity, slippage versus the authorized envelope, broker acknowledgement timing, and any post-trade incident flag. A slippage breach makes the incident coordinate less permissive and contracts later authority on the affected scope. In a customer-facing workflow, $o_t$ records whether the SOP path reached its terminal node, whether the user re-asked the same intent within $K$ turns, and any post-emission compliance flag. A repeated inquiry reclassifies the next candidate from \texttt{sop\_response} to a post-clarification SOP response; candidate-hash invalidation then requires a fresh receipt.}

\section{Formalization}
\label{sec:formalization}

This section formalizes SAGE-Fin's runtime objects (candidates, registries, coverage debt, authority states, receipts), the dynamic authority controller, and the soundness properties of the runtime gates. Notation is summarized in Table~\ref{tab:notation}.

\begin{table}[!tbp]
\caption{Notation summary.}
\label{tab:notation}
\small
\begin{tabularx}{\columnwidth}{lY}
\toprule
Symbol & Meaning \\
\midrule
$P_t,\Gamma$ & proposal context; deterministic compiler \\
$c_t^\star,\theta_t$ & re-derived typed candidate; nominal type \\
$\phi_t,s_t,a_t^{req}$ & content or action; scope; requested cap \\
$W_t^{req},V_t^{req},E_t,x_t$ & witness and validator predicates; forbidden effects; exact artifact \\
$W_t,V_t$ & active witness / validator registry \\
$D_t,D_t^{blk}$ & coverage debt; blocking coverage debt \\
$\xi_t$ & consumption-time runtime state \\
$\Phi_t,\Pi_t,\mathcal B_t,\mathcal R_t,\mathcal H_t,I_t,\Delta_t$ & freshness, policy, reservation, receipt, history, incident, dialogue/drift state \\
$b_t,H(b_t)$ & regime posterior; normalized regime entropy \\
$\mathcal{F}_t$ & filtration generated by observations up to $t$ \\
$g,\mathcal A_g,\preceq_g,\wedge_g$ & adapter and its finite authority semilattice \\
$A_{g,t}$ & current adapter-specific authority cap \\
$R,\mathsf{Accept}(g)$ & receipt; receipt types accepted by adapter $g$ \\
$U_t$ & upstream evidence-activation or language-validation result \\
$\mathrm{Reconcile}$ & state feedback map after outcome $o_t$ \\
\bottomrule
\end{tabularx}
\end{table}

\subsection{Candidates, registries, and coverage debt}

\begin{definition}[Typed authority request]\label{def:candidate}
At adapter consumption, a deterministic compiler re-derives
\[
c_t^\star=\Gamma(P_t,\xi_t)
=(\theta_t,\phi_t,s_t,a_t^{req},W_t^{req},V_t^{req},E_t),
\]
where $\theta_t$ is a nominal candidate type, $\phi_t$ the proposed content or action, $s_t$ its scope, $a_t^{req}$ the adapter-specific requested cap, $W_t^{req}$ and $V_t^{req}$ the required witness and validator predicates, and $E_t$ the forbidden effects. Let $x_t$ be the exact serialized artifact presented to adapter $g$. Re-derivation makes a dialogue or state change authority-relevant even when surface text is unchanged. Compilation creates a request, not authority.
\end{definition}

\begin{remark}[Surface--authority separation]\label{rem:surface-authority}
Let $q_t$ denote the authority signature presented at consumption: candidate type, adapter, nominal receipt type, scope, validity predicate, receipt-issued cap, and current adapter cap. Define $x_t\equiv_{\mathrm{text}}x_{t'}$ when two realized artifacts have the same surface content, and $q_t\equiv_{\mathrm{auth}}q_{t'}$ when those authority coordinates agree. Whenever $\Gamma$ or the gate depends on a state coordinate that may change across turns, surface equality does not imply authority equality:
\[
x_t\equiv_{\mathrm{text}}x_{t'}
\;\not\Rightarrow\;
q_t\equiv_{\mathrm{auth}}q_{t'}.
\]
The same text can therefore require a different candidate type, witness set, receipt, or fallback after dialogue, freshness, portfolio, reservation, incident, or policy state changes.
\end{remark}

\begin{definition}[Runtime state and filtration]\label{def:state}
The runtime state available at consumption is
\[
\xi_t=(m_t,p_t,\Phi_t,\Pi_t,\mathcal B_t,\mathcal R_t,\mathcal H_t,I_t,\Delta_t),
\]
with market, portfolio, freshness, policy, reservation, receipt, history, incident, and dialogue or drift coordinates. Let $\mathcal{F}_t$ denote the filtration generated by observations through $t$; gates use only $\mathcal{F}_t$-measurable state. If a latent regime witness is present (Section~\ref{sec:hmm}), $b_t$ denotes its filtered posterior.
\end{definition}

\begin{definition}[Coverage debt]\label{def:debt}
For active witnesses $W_t$ and validators $V_t$, $D_t(c_t^\star)$ contains each predicate in $W_t^{req}\cup V_t^{req}$ whose corresponding witness or validator is absent, stale, unpromoted, provenance-invalid, or scope-incompatible. Blocking debt $D_t^{blk}$ contains predicates that the target adapter requires; other debt may contract the available cap.
\end{definition}

\begin{proposition}[Monotonicity of authority in coverage debt]\label{prop:cov-monotone}
For fixed adapter $g$ and candidate $c$, if $D\subseteq D'$ and the registered coverage term is contractive in debt, then $A_g^{cov}(D')\preceq_g A_g^{cov}(D)$.
\end{proposition}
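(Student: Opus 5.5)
The proof is essentially an unfolding of what ``contractive in debt'' means, together with the order-theoretic properties of the finite authority semilattice $(\mathcal A_g,\preceq_g,\wedge_g)$ referenced in Table~\ref{tab:notation} (Definition~\ref{def:lattice}). I would first make the hypothesis precise. The registered coverage term is a map $A_g^{cov}:2^{W^{req}\cup V^{req}}\to\mathcal A_g$, and I take ``contractive in debt'' to mean that each debt item $d$ carries a registered contraction cap $\kappa_g(d)\in\mathcal A_g$ and that
\[
A_g^{cov}(D)=a^{req}\wedge_g \bigwedge_{d\in D}\kappa_g(d).
\]
Here blocking items in $D^{blk}$ map to $\bot$, and the empty meet is $\top$. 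If the paper's intended definition is instead the abstract condition ``$A_g^{cov}(D\cup\{d\})\preceq_g A_g^{cov}(D)$ for every $D$ and $d$,'' I would use that condition directly in place of the explicit formula.

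Under the meet formulation, the argument runs as follows. Write $D'=D\cup(D'\setminus D)$. Because $\wedge_g$ is associative, commutative, and idempotent in a semilattice, we get
\[
A_g^{cov}(D')=A_g^{cov}(D)\wedge_g\bigwedge_{d\in D'\setminus D}\kappa_g(d).
\]
Since $a\wedge_g b\preceq_g a$ for all $a,b$, this immediately gives $A_g^{cov}(D')\preceq_g A_g^{cov}(D)$. Finiteness of $\mathcal A_g$ and of the predicate set ensures that all meets exist, and the fixed adapter and candidate ensure that $a^{req}$ and $\kappa_g$ do not change between the two sides.

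Under the abstract one-step formulation, I would instead argue by induction on $|D'\setminus D|$, which is finite because the required predicates are finite. Enumerate $D'\setminus D=\{d_1,\dots,d_k\}$ and let $D_j=D\cup\{d_1,\dots,d_j\}$. Each step satisfies $A_g^{cov}(D_{j})\preceq_g A_g^{cov}(D_{j-1})$, and transitivity of the partial order $\preceq_g$ chains these into $A_g^{cov}(D_k)=A_g^{cov}(D')\preceq_g A_g^{cov}(D_0)=A_g^{cov}(D)$.

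The main obstacle is definitional rather than computational: fixing what ``contractive'' means and checking that the contraction is well defined independently of enumeration order. The meet form handles order-independence automatically. In the abstract form, the chain yields the bound for any ordering, so order-independence is unnecessary. I would also note that only debt-monotonicity is claimed. The full cap $A_{g,t}$ additionally meets with other state coordinates, and monotonicity of $A_{g,t}$ then follows from monotonicity of $\wedge_g$ in each argument, though that extension is not required here.
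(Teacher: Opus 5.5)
Your proposal is correct and matches the paper's argument. The paper also treats the result as an unfolding of the coverage term's declared contract (adding an unmet obligation cannot increase its cap), combined with the fact that meet composition preserves the order. Your induction on $|D'\setminus D|$ with transitivity of $\preceq_g$, and your alternative model as a meet of per-item caps, just make explicit the steps the paper's sketch leaves implicit.
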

\begin{proof}[Proof sketch]
This is the declared contract of the coverage term: adding an unmet obligation cannot increase its cap. Meet composition preserves the order.
\end{proof}

\subsection{Authority lattice and dynamic controller}

\begin{definition}[Adapter authority semilattice]\label{def:lattice}
Each adapter $g$ has a finite meet-semilattice $(\mathcal A_g,\preceq_g,\wedge_g,\top_g)$. Its ordered caps are meaningful only for that adapter: response commitments may order emission classes, execution caps may combine mode, size, slippage, and expiry, and policy caps may order enforcement scope. Candidate types and adapters are nominal and are not ordered against one another.
\end{definition}

\begin{definition}[Dynamic authority controller]\label{def:controller}
For adapter $g$, the controller returns
\[
A_{g,t}=A_g^{base}(c_t^\star)\wedge_g A_g^{cov}(D_t)
\wedge_g A_g^{state}(\xi_t)\wedge_g A_g^{reg}(b_t).
\]
The state term incorporates registered portfolio, reservation, freshness, policy, incident, and dialogue predicates. If the candidate and adapter do not register a regime-witness obligation, $A_g^{reg}=\top_g$. A required regime witness that is missing or stale instead enters coverage debt or contracts the state term according to its registered fail-closed policy. Each non-base term is contractive: it can preserve or reduce nominal authority but cannot increase it.
\end{definition}

\begin{remark}[Conservative composition]
The meet form prevents a strong signal from compensating for missing witnesses, failed validators, portfolio stress, policy conflict, or regime stress. A receipt valid at issuance can therefore become invalid before consumption.
\end{remark}

\subsection{Receipts and runtime gates}

\begin{definition}[Receipt]\label{def:receipt}
A receipt is the tuple
\[
R=(\kappa_R,i_R,u_R,h_R^c,h_R^x,s_R,a_R,h_R^\Pi,t_i,t_e,\chi_R),
\]
where $\kappa_R$ is the nominal type, $i_R$ a trusted issuer, $u_R$ the user, account, or policy subject, $h_R^c$ and $h_R^x$ the candidate and exact-artifact hashes, $s_R$ the scope, $a_R\in\mathcal A_g$ the issued cap, $h_R^\Pi$ the policy hash, $t_i,t_e$ the issue and expiry times, and $\chi_R$ the state predicates that must hold at consumption. Receipt types are
\[
\begin{aligned}
\mathsf{Types}=\{&\mathsf{evd},\mathsf{bud},\mathsf{val},\mathsf{fxt},\mathsf{shd},\\
&\mathsf{commit},\mathsf{execAuth},\mathsf{deploy}\}.
\end{aligned}
\]
\end{definition}

\begin{definition}[Consumption-time gate]\label{def:vld}
Let
\[
\begin{aligned}
\mathsf{Accept}(\mathsf{response})&=\{\mathsf{commit}\},\\
\mathsf{Accept}(\mathsf{exec})&=\{\mathsf{execAuth}\},\\
\mathsf{Accept}(\mathsf{policy})&=\{\mathsf{deploy}\}.
\end{aligned}
\]
A receipt is valid for adapter $g$ only if $\kappa_R\in\mathsf{Accept}(g)$; $i_R$ belongs to the trusted issuer set and $\mathsf{Issued}_{i_R}(R)$ holds; $t_i\leq t\leq t_e$; $h_R^c=H(c_t^\star)$ and $h_R^x=H(x_t)$; the subject, scope, and policy bindings match; and $\chi_R(\xi_t)$ holds. The adapter accepts iff
\[
\begin{aligned}
\mathsf{Gate}_g(x_t,c_t^\star,R,\xi_t)=\mathbf 1[&
\mathsf{vld}_g(x_t,c_t^\star,R,\xi_t,t)\\
&\wedge\ \mathsf{cap}_g(x_t)\preceq_g a_t^{req}
\preceq_g a_R\wedge a_t^{req}\preceq_g A_{g,t}\\
&\wedge\ D_t^{blk}(c_t^\star)=\varnothing
\wedge \mathsf{Effects}_g(x_t)\cap E_t=\varnothing].
\end{aligned}
\]
Otherwise the adapter rejects $x_t$. Any downgrade, clarification, or handoff must be independently declared, compiled, and gated.
\end{definition}

\begin{assumption}[Mediated trusted consumption]\label{ass:gates}
Every governed effect occurs only as an adapter-consumption event after gate acceptance, and $\mathsf{Issued}_{i_R}(R)$ holds only after the corresponding trusted issuance event. The compiler, registries, witness providers, clocks, state feeds, hash bindings, and internal receipt issuer are trusted.
\end{assumption}

\begin{theorem}[Trace-level authority non-amplification]\label{thm:trace-safety}
Let $\tau=(e_1,\ldots,e_n)$ be any finite sequence of proposal, witness/validator, receipt-issuance, state-update, and adapter-consumption events. Under Assumption~\ref{ass:gates}, every governed effect event $e_t$ has an earlier trusted issuance event for an accepted-type receipt whose candidate, exact artifact, subject, scope, cap, policy, time, and state bindings hold at $t$, with no blocking debt. Extending a trace only with non-consumption events cannot add a governed effect; a receipt whose bound predicate becomes false cannot authorize a later effect.
\end{theorem}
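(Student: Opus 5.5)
The argument is an induction on the length of the trace $\tau$, carried out by unfolding Definition~\ref{def:vld} at each consumption event and invoking Assumption~\ref{ass:gates} to connect each gate acceptance to an earlier issuance. First I fix the event vocabulary and the trace semantics. An event is either a proposal, a witness/validator update, a receipt issuance, a state update (including $\mathrm{Reconcile}$), or an adapter-consumption event $\mathsf{consume}_g(x_t,R)$. Each event $e_t$ induces a state $\xi_t$ and registries $W_t,V_t$ by deterministic update. A governed effect is defined as an occurrence of a consumption event whose gate evaluates to $1$. By Assumption~\ref{ass:gates}, this definition is exhaustive: no other event type produces a governed effect. This reduces the first claim to a property of accepted consumption events.

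\textbf{Step 1 (per-effect witness).} Let $e_t$ be a governed effect at adapter $g$. By Assumption~\ref{ass:gates}, $\mathsf{Gate}_g(x_t,c_t^\star,R,\xi_t)=1$. I then unpack the conjuncts of Definition~\ref{def:vld} one at a time. The condition $\kappa_R\in\mathsf{Accept}(g)$ gives the accepted type. The conditions $i_R$ trusted and $\mathsf{Issued}_{i_R}(R)$ yield, again via the assumption, a trusted issuance event $e_s$ for $R$; because issuance state is monotone along the trace, $s<t$. The equalities $h_R^c=H(c_t^\star)$ and $h_R^x=H(x_t)$, together with trusted hashing, give the candidate and exact-artifact bindings. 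The subject, scope, and policy matches come next, followed by $t_i\le t\le t_e$ for the time binding and $\chi_R(\xi_t)$ for the state binding. The chain $a_t^{req}\preceq_g a_R$ and $a_t^{req}\preceq_g A_{g,t}$ gives the cap binding. Finally, $D_t^{blk}(c_t^\star)=\varnothing$ gives the absence of blocking debt. Because $c_t^\star$ is re-derived at $t$ (Definition~\ref{def:candidate}), every one of these bindings is evaluated against the state at consumption time rather than at issuance time.

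\textbf{Step 2 (non-consumption extension).} Let $\tau'=\tau\cdot e_{n+1}\cdots e_m$, where each appended event is a non-consumption event. The set of governed effects is the set of indices of accepted consumption events, and no appended index is a consumption event. By Assumption~\ref{ass:gates}, issuance, state, and registry events are not governed effects. It follows that the effect set of $\tau'$ equals the effect set of $\tau$. The appended events may change $\xi$ and thereby change whether future consumptions succeed, but they cannot themselves add a governed effect.

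\textbf{Step 3 (revocation).} Suppose the bound predicate $\chi_R$, or any other binding such as expiry, $H(c^\star)$ after a state change alters $\Gamma$, or a contraction of $A_{g,t}$, is false at some later $t'$. Then $\mathsf{vld}_g$ fails at $t'$, so the gate rejects and $R$ cannot supply the Step~1 witness for an effect at $t'$. I expect this step to be the main obstacle. A receipt validated at $t$ could be reused at $t'$ with the predicate true again, so the theorem's claim must be read pointwise: a receipt cannot authorize any effect at a time where its predicate is false. I would state this explicitly, since a "once false, forever false" reading holds only under monotone predicates. A second subtlety is ruling out authority arriving from a different receipt or from a fallback. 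For that I would note that a fallback is itself a separately gated consumption (Definition~\ref{def:vld}), so it falls back under Step~1 with its own receipt rather than inheriting authority from $R$.
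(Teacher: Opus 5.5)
Your proposal is correct and follows essentially the same route as the paper's sketch: both unfold $\mathsf{Gate}_g=1$ via Definition~\ref{def:vld} at each mediated consumption, use Assumption~\ref{ass:gates} to rule out effects from non-consumption events and to place the trusted issuance strictly earlier, and derive revocation from a false bound predicate forcing $\mathsf{vld}_g=0$ at that consumption time---which is exactly the pointwise reading you make explicit. Two small tidy-ups: $s<t$ follows directly from the assumption's ``only after'' clause, with no need for monotone issuance state; and a contraction of $A_{g,t}$ falsifies the separate cap conjunct of $\mathsf{Gate}_g$ rather than $\mathsf{vld}_g$, though the rejection conclusion is unchanged.
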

\begin{proof}[Proof sketch]
Induct on $n$. The empty trace is immediate. A final non-consumption event adds no governed effect by complete mediation. If $e_n$ is a governed effect, mediation requires $\mathsf{Gate}_g=1$; Definition~\ref{def:vld} supplies the earlier trusted issuance, accepted nominal type, exact bindings, current validity, empty blocking debt, permitted effect, and requested/issued/current cap bounds. A progress receipt fails the type check, and a false bound predicate makes $\mathsf{vld}_g=0$. Applying the induction hypothesis to $(e_1,\ldots,e_{n-1})$ proves the claim.
\end{proof}

\begin{theorem}[Receipt non-substitutability]\label{thm:nonsub}
Progress and wrong-adapter receipts cannot substitute for \textsf{commit}, \textnormal{\texttt{execAuth}}, or \textnormal{\texttt{deploy}}, because their nominal types are outside the corresponding acceptance set.
\end{theorem}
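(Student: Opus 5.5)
The plan is to unfold Definition~\ref{def:vld} and argue by case analysis on the consuming adapter $g\in\{\mathsf{response},\mathsf{exec},\mathsf{policy}\}$ and on the nominal type $\kappa_R$ of the submitted receipt. First I would make the claim precise. A receipt $R$ \emph{substitutes} for adapter $g$ if there is some $x_t$, $c_t^\star$, and $\xi_t$ with $\mathsf{Gate}_g(x_t,c_t^\star,R,\xi_t)=1$ while $\kappa_R\notin\mathsf{Accept}(g)$. A progress receipt is one with $\kappa_R\in\{\mathsf{evd},\mathsf{bud},\mathsf{val},\mathsf{fxt},\mathsf{shd}\}$. A wrong-adapter receipt is one with $\kappa_R\in\{\mathsf{commit},\mathsf{execAuth},\mathsf{deploy}\}\setminus\mathsf{Accept}(g)$. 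Together these two classes are exactly $\mathsf{Types}\setminus\mathsf{Accept}(g)$, so the theorem reduces to showing that $\kappa_R\notin\mathsf{Accept}(g)$ forces $\mathsf{Gate}_g=0$.

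The core step is immediate from the gate's conjunctive form. $\mathsf{Gate}_g$ is the indicator of a conjunction whose first conjunct is $\mathsf{vld}_g$. By Definition~\ref{def:vld}, $\mathsf{vld}_g$ holds only if $\kappa_R\in\mathsf{Accept}(g)$. So whenever $\kappa_R\notin\mathsf{Accept}(g)$, we get $\mathsf{vld}_g=0$, hence $\mathsf{Gate}_g=0$. This holds uniformly in every other coordinate: hashes, scope, cap, freshness, debt, and effects. Consequently no choice of bindings, and no amount of accumulated progress, can repair a type mismatch. Since each $\mathsf{Accept}(g)$ is a singleton and the three singletons are pairwise disjoint, I would also note the bookkeeping fact that each adapter-receipt type is accepted by exactly one adapter. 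This rules out cross-adapter use such as $\mathsf{commit}$ at the execution gate.

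To lift the statement from single gate calls to effects, I would invoke Assumption~\ref{ass:gates} and Theorem~\ref{thm:trace-safety}. Every governed effect on $g$ has an earlier trusted issuance of a receipt of accepted type that passes $\mathsf{Gate}_g$. Therefore no governed effect in any trace is authorized solely by a progress or wrong-adapter receipt. This also covers any sequence of such receipts, because the gate evaluates one receipt $R$ at a time and each such $R$ fails individually.

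I expect the main obstacle to be interpretive rather than technical. One must exclude the possibility that the issuer or compiler relabels a progress receipt, or that the nominal type $\kappa_R$ is forged. I would handle this by appeal to the trusted-issuance and hash-binding clauses of Assumption~\ref{ass:gates}: $\kappa_R$ is part of the issued, trusted tuple, so nominal types are not malleable within the trust boundary. I would then state explicitly that the result is relative to that assumption.
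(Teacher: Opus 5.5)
Your proposal is correct and follows the paper's one-line justification. Definition~\ref{def:vld} makes $\mathsf{vld}_g$, and hence $\mathsf{Gate}_g$, require $\kappa_R\in\mathsf{Accept}(g)$, so any progress or wrong-adapter type forces rejection regardless of the other bindings; this is the same ``progress receipt fails the type check'' step used in the proof sketch of Theorem~\ref{thm:trace-safety}, while your trace-level lift via Assumption~\ref{ass:gates} and your note that $\kappa_R$ is fixed by trusted issuance are sound elaborations the paper leaves implicit.
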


\begin{proposition}[Upstream-validation non-escalation]\label{prop:upstream-nonescalation}
Let $U_t$ be an upstream evidence-activation or language-validation result whose registered effect is limited to adding a witness or satisfying a validator predicate. For every adapter $g$,
\[
U_t=\mathsf{pass}\ \wedge\
\bigl(R=\varnothing\ \vee\ \kappa_R\notin\mathsf{Accept}(g)\bigr)
\quad\Longrightarrow\quad
\mathsf{Gate}_g=0.
\]
An upstream pass may reduce coverage debt and allow a trusted issuer to consider a later adapter receipt. It cannot itself change a receipt's nominal type, supply trusted issuance, bind the exact artifact, or satisfy consumption-time state predicates.
\end{proposition}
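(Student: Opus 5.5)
The implication follows from the shape of $\mathsf{Gate}_g$ in Definition~\ref{def:vld}. Its antecedent is a disjunction, so I split on the two disjuncts. In each case I show that the first conjunct, $\mathsf{vld}_g(x_t,c_t^\star,R,\xi_t,t)$, is false. The indicator is then $0$ whatever the remaining conjuncts are, and the hypothesis $U_t=\mathsf{pass}$ never has to be used on the left. The real content of the proposition is the claim that this hypothesis cannot rescue $\mathsf{vld}_g$. That is the step I expect to need care.

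\emph{Case $R=\varnothing$.} Validity in Definition~\ref{def:vld} requires a receipt tuple with a nominal type $\kappa_R\in\mathsf{Accept}(g)$, a trusted issuer $i_R$ satisfying $\mathsf{Issued}_{i_R}(R)$, and bindings $h_R^c=H(c_t^\star)$ and $h_R^x=H(x_t)$. With no receipt, none of these predicates is satisfied. I will make explicit the convention that $\mathsf{vld}_g(\cdot,\varnothing,\cdot)=0$, so the gate is fail-closed on missing input. This convention is also the reading that Assumption~\ref{ass:gates} forces: an effect requires acceptance after a trusted issuance event, and with $R=\varnothing$ no such event exists.

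\emph{Case $\kappa_R\notin\mathsf{Accept}(g)$.} The type-membership clause of $\mathsf{vld}_g$ fails directly. This is exactly the argument of Theorem~\ref{thm:nonsub}, which I will invoke rather than repeat. It covers progress receipts $\mathsf{evd},\mathsf{bud},\mathsf{val},\mathsf{fxt},\mathsf{shd}$ as well as adapter receipts presented to the wrong adapter.

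\emph{The main step: $U_t$ cannot make $\mathsf{vld}_g$ true.} I use the registered-effect hypothesis on $U_t$. Its only effect on the runtime is to add a witness to $W_t$ or to mark a predicate in $V_t^{req}$ as satisfied. By Definition~\ref{def:debt}, that can only shrink $D_t$ and $D_t^{blk}$. By Proposition~\ref{prop:cov-monotone} and the meet form of Definition~\ref{def:controller}, it can therefore only raise $A_g^{cov}$ and hence $A_{g,t}$. All of these quantities sit in the second through fourth conjuncts of the gate, not in $\mathsf{vld}_g$. To make this rigorous, I list the coordinates that $\mathsf{vld}_g$ reads: $\kappa_R$, $i_R$, $\mathsf{Issued}$, $t_i,t_e$, $h_R^c$, $h_R^x$, the subject, scope and policy bindings, and $\chi_R(\xi_t)$. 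I then check that each one lies outside $U_t$'s registered write set. A small subtlety is $\chi_R(\xi_t)$, which reads state. Here I will use the fact that witnesses live in the registries, not in the receipt-defining coordinates, so $U_t$ does not write to the state $\chi_R$ is evaluated on. This check justifies the closing sentence of the proposition: $U_t$ cannot change a nominal type, supply issuance, bind the artifact, or satisfy consumption-time predicates. Within this case, $U_t$ can only reduce debt, which the remark records; any later adapter receipt would need its own trusted issuance event, which by Theorem~\ref{thm:trace-safety} is a separate event in the trace, not a consequence of $U_t$.
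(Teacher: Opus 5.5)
Your case split is the paper's argument: $\mathsf{vld}_g$ in Definition~\ref{def:vld} requires a present receipt with $\kappa_R\in\mathsf{Accept}(g)$, so either disjunct falsifies that conjunct and $\mathsf{Gate}_g=0$ regardless of $U_t$ or the resulting coverage state. Your extra ``main step'' is not needed for the displayed implication, since no change $U_t$ makes to the other conjuncts can rescue a false one; its treatment of $\chi_R(\xi_t)$ is also unsupported, because the freshness coordinate $\Phi_t$ is part of $\xi_t$ and ``witnesses live in the registries'' does not show that $U_t$ leaves that state untouched, so the proposition's closing sentence is better read as following directly from the registered-effect hypothesis.
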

\begin{proof}[Proof sketch]
Definition~\ref{def:vld} requires a present receipt whose nominal type lies in $\mathsf{Accept}(g)$, in addition to exact-artifact, scope, time, policy, cap, and current-state checks. Failure of the receipt condition is sufficient for rejection regardless of $U_t$ or the resulting coverage state.
\end{proof}

\begin{theorem}[Soundness of the execution gate]\label{thm:exec-sound}
If the execution adapter accepts instruction $x_t$, then Theorem~\ref{thm:trace-safety} holds with $\kappa_R=\mathsf{execAuth}$, $\mathsf{cap}_{\mathsf{exec}}(x_t)\preceq_{\mathsf{exec}}a_R$, and $\mathsf{cap}_{\mathsf{exec}}(x_t)\preceq_{\mathsf{exec}}A_{\mathsf{exec},t}$.
\end{theorem}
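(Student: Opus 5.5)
The argument specializes Definition~\ref{def:vld} to $g=\mathsf{exec}$ and then reads off the two cap inequalities from the lattice structure of Definition~\ref{def:lattice}.

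By Assumption~\ref{ass:gates}, acceptance of $x_t$ by the execution adapter means $\mathsf{Gate}_{\mathsf{exec}}(x_t,c_t^\star,R,\xi_t)=1$. This single consumption event is exactly the final event of a trace to which Theorem~\ref{thm:trace-safety} applies. That theorem then supplies an earlier trusted issuance event together with the candidate, artifact, subject, scope, policy, time and state bindings, and empty blocking debt. So the first step is only to note that the hypotheses of Theorem~\ref{thm:trace-safety} are met by the trace ending at this acceptance.

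Next, I identify the receipt type. The conjunct $\mathsf{vld}_{\mathsf{exec}}$ requires $\kappa_R\in\mathsf{Accept}(\mathsf{exec})=\{\mathsf{execAuth}\}$, and this set is a singleton, so $\kappa_R=\mathsf{execAuth}$. Theorem~\ref{thm:nonsub} also rules out any progress receipt or $\mathsf{commit}$/$\mathsf{deploy}$ receipt standing in for it.

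Then I derive the two cap bounds. The gate conjunct gives the chain $\mathsf{cap}_{\mathsf{exec}}(x_t)\preceq a_t^{req}\preceq a_R\wedge a_t^{req}\preceq A_{\mathsf{exec},t}$. I parse this as the three inequalities $\mathsf{cap}(x_t)\preceq a_t^{req}$, $a_t^{req}\preceq a_R$, and $a_t^{req}\preceq A_{\mathsf{exec},t}$. Since $\preceq_{\mathsf{exec}}$ is a partial order on the finite meet-semilattice $\mathcal A_{\mathsf{exec}}$, transitivity yields $\mathsf{cap}(x_t)\preceq a_R$ and $\mathsf{cap}(x_t)\preceq A_{\mathsf{exec},t}$. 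One could equivalently use the meet's greatest-lower-bound property to get $\mathsf{cap}(x_t)\preceq a_R\wedge A_{\mathsf{exec},t}$ and then project to each argument.

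The main obstacle is the notational ambiguity in Definition~\ref{def:vld}, where $a_R\wedge a_t^{req}$ could be read either as a lattice meet or as a logical conjunction. I would try both readings. Under the meet reading, $a_t^{req}\preceq a_R\wedge_{\mathsf{exec}} a_t^{req}$ implies $a_t^{req}\preceq a_R$ by the glb property, so the same transitivity argument goes through. Under the conjunction reading, the inequality is immediate. Either way the conclusion holds, and nothing beyond order-theoretic bookkeeping on top of Theorem~\ref{thm:trace-safety} is needed.
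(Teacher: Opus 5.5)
Your proposal is correct and matches the paper. The paper gives no separate proof and treats this theorem as Theorem~\ref{thm:trace-safety} instantiated at $g=\mathsf{exec}$: the singleton $\mathsf{Accept}(\mathsf{exec})=\{\mathsf{execAuth}\}$ fixes $\kappa_R$, and the requested/issued/current cap clause of Definition~\ref{def:vld} gives both bounds by transitivity through $a_t^{req}$, exactly as you argue (your check that both the meet and conjunction readings of that clause work is a harmless extra).
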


\begin{theorem}[Soundness of the policy enforcement gate]\label{thm:pol-sound}
If the policy adapter enforces policy artifact $x_t$, then Theorem~\ref{thm:trace-safety} holds with $\kappa_R=\mathsf{deploy}$ and $h_R^x=H(x_t)$.
\end{theorem}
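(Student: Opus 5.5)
The plan is to specialize the trace-level argument to the policy adapter $g=\mathsf{policy}$ and read off the two extra conjuncts directly from Definition~\ref{def:vld}. Suppose the policy adapter enforces $x_t$. By Assumption~\ref{ass:gates}, enforcement is a governed effect that occurs only as an adapter-consumption event after gate acceptance, so $\mathsf{Gate}_{\mathsf{policy}}(x_t,c_t^\star,R,\xi_t)=1$ for the presented receipt $R$. Theorem~\ref{thm:trace-safety}, applied to the trace ending in this consumption event, then gives the generic conclusion. That conclusion comprises an earlier trusted issuance event for an accepted-type receipt, with all candidate, artifact, subject, scope, cap, policy, time, and state bindings holding at $t$, and no blocking debt. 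What remains is to identify the accepted type and the artifact binding concretely.

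Both facts come from unpacking $\mathsf{vld}_{\mathsf{policy}}$, which is a conjunct of the gate. First, validity requires $\kappa_R\in\mathsf{Accept}(\mathsf{policy})=\{\mathsf{deploy}\}$, and a singleton acceptance set forces $\kappa_R=\mathsf{deploy}$. This is also Theorem~\ref{thm:nonsub} restricted to this adapter: progress receipts such as $\mathsf{val}$, $\mathsf{fxt}$, and $\mathsf{shd}$, as well as $\mathsf{commit}$ and $\mathsf{execAuth}$, are excluded. Second, validity includes the binding $h_R^x=H(x_t)$, where $x_t$ is the exact serialized artifact presented to the adapter, here the policy IR being enforced. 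The policy-hash condition $h_R^\Pi$ matching current policy state is already among the matched bindings.

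The only point needing care is that the artifact actually enforced is the one whose hash was checked. I would argue this from complete mediation: the effect event is the consumption of $x_t$ itself, not of some other serialization. Combined with the trusted hash binding in Assumption~\ref{ass:gates}, this rules out substituting a different policy after the check. Given that, the proof is a direct instantiation of Theorem~\ref{thm:trace-safety}, with no induction beyond what that theorem already provides.
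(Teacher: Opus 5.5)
Your proposal is correct and matches the paper's intended argument: the paper gives no separate proof and treats the theorem as a direct instantiation of Theorem~\ref{thm:trace-safety} at $g=\mathsf{policy}$, with $\kappa_R=\mathsf{deploy}$ forced by the singleton set $\mathsf{Accept}(\mathsf{policy})$ and $h_R^x=H(x_t)$ read off the exact-artifact binding in $\mathsf{vld}_{\mathsf{policy}}$ (Definition~\ref{def:vld}). Your extra point that the enforced artifact is the hashed one is covered, as you say, by complete mediation and the trusted hash bindings in Assumption~\ref{ass:gates}.
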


\begin{remark}[Liveness is bounded by coverage]
The gate results are soundness, not liveness. SAGE-Fin does not guarantee that any candidate receives authority; persistent blocking debt or stress may leave no admissible effect. Liveness depends on registry coverage and validator availability outside the runtime guarantee.
\end{remark}

\begin{proposition}[Conditional authority contraction]\label{prop:contraction}
For fixed adapter $g$, candidate $c$, and regime witness $b$, let $\xi'\sqsubseteq_g\xi$ mean that every registered state coordinate in $\xi'$ is no more permissive than its counterpart in $\xi$. If each controller term is order-preserving under this adverse-state preorder, then
\[
A_g(c,\xi',b)\preceq_g A_g(c,\xi,b).
\]
Candidate reclassification is handled separately by hash invalidation and receipt re-issuance.
\end{proposition}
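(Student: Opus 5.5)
The plan is to unfold the controller of Definition~\ref{def:controller} at the two states and compare the four meet factors one at a time. With $g$, $c$ and $b$ fixed,
\[
A_g(c,\xi,b)=A_g^{base}(c)\wedge_g A_g^{cov}(D(c,\xi))\wedge_g A_g^{state}(\xi)\wedge_g A_g^{reg}(b),
\]
and $A_g(c,\xi',b)$ is the same expression with $\xi'$ in place of $\xi$. The base term depends only on $c$ and is therefore identical in both expressions. The regime term depends only on $b$, which is also fixed; it equals $\top_g$ when no regime obligation is registered. If a missing or stale regime witness is routed into debt or into the state term, I will handle it under those terms below.

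The substantive work is in the two state-dependent factors.

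\textbf{State term.} The hypothesis that each controller term is order-preserving under the adverse-state preorder gives $A_g^{state}(\xi')\preceq_g A_g^{state}(\xi)$ directly from $\xi'\sqsubseteq_g\xi$.

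\textbf{Coverage term.} Coverage debt is computed from the freshness and registry coordinates of the state, so the argument goes through debt. I first argue that $\xi'\sqsubseteq_g\xi$ implies $D(c,\xi)\subseteq D(c,\xi')$. By Definition~\ref{def:debt}, a predicate enters debt when its witness is absent, stale, unpromoted, provenance-invalid, or scope-incompatible. A "no more permissive" freshness or registry coordinate cannot make any of these conditions go from true to false. Proposition~\ref{prop:cov-monotone} then yields $A_g^{cov}(D(c,\xi'))\preceq_g A_g^{cov}(D(c,\xi))$.

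If one prefers not to argue the debt inclusion separately, the statement's hypothesis can be read as covering the composite map $\xi\mapsto A_g^{cov}(D(c,\xi))$ directly. I expect this step to be the main obstacle: the statement's notion of "registered state coordinate" must be shown to include the witness-freshness and promotion status that feed $D_t$. Otherwise the debt set could change non-monotonically. I would resolve this by taking the order-preservation hypothesis to apply to every controller term as a function of $\xi$, the coverage term included.

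Finally, I combine the factors using the fact that the meet in a meet-semilattice is monotone in each argument: if $a'\preceq_g a$ and $d'\preceq_g d$, then $a'\wedge_g d'\preceq_g a\wedge_g d$. This holds because $a'\wedge_g d'$ is a lower bound of both $a$ and $d$, and $a\wedge_g d$ is their greatest lower bound. Applying this factorwise, with the unchanged factors using reflexivity of $\preceq_g$, gives $A_g(c,\xi',b)\preceq_g A_g(c,\xi,b)$.

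The closing sentence of the statement, on candidate reclassification, needs no proof here. Because $c$ is held fixed, a state change that alters $\Gamma(P_t,\xi_t)$ is outside the comparison. Such a change is handled by the hash check $h_R^c=H(c_t^\star)$ in Definition~\ref{def:vld}.
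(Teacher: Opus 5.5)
Your proposal is correct and follows the paper's argument, which is the one-line observation that meet monotonicity preserves the componentwise order of the controller terms. On the step you flagged, your fallback is exactly what the paper's sketch tacitly assumes: the order-preservation hypothesis covers the coverage term as a function of $\xi$. The paper does not derive debt inclusion from the informal notion of ``no more permissive,'' so closing the step through the hypothesis is the right choice.
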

\begin{proof}[Proof sketch]
Meet monotonicity preserves the componentwise order of the registered controller terms.
\end{proof}

Proposition~\ref{prop:contraction} concerns controller-cap monotonicity. Receipt revocation does not depend on its premise: failure of any predicate bound in $\chi_R$ already makes $\mathsf{vld}_g=0$ by Definition~\ref{def:vld}; Theorem~\ref{thm:revoke} gives the execution-regime instantiation.

Candidate and exact-artifact bindings reject cross-artifact reuse. Preventing repeated consumption of the same receipt for the same artifact requires an atomic spent-receipt ledger and is not provided by the current prototype.

\section{Adapter-Specific Instantiations}
\label{sec:instantiations}

The preceding contract is adapter-generic. We next instantiate it at two financial boundaries: an optional latent-regime witness that can contract execution authority, and a dialogue-to-policy path that requires deterministic IR before enforcement. These examples share the same candidate--debt--receipt--gate lifecycle; they are not separate authorization models.

\subsection{Execution: Latent Regime Witness}
\label{sec:hmm}

SAGE-Fin may incorporate a latent market-regime model as a witness provider. We instantiate this with a Hidden Markov Model (HMM)~\cite{rabiner1989tutorial}. The HMM is not an alpha model; it is an auditable source of dynamic authority state whose output can contract, shorten, or revoke authority but cannot by itself justify a trade.

\subsubsection{Regime witness}

Let the latent regime be $z_t\in\mathcal{Z}=\{N,V,S,I\}$ (normal, volatile, stressed, illiquid). The observation vector is
\[
y_t=(r_t,\sigma_t,\delta_t,\nu_t,\lambda_t,\eta_t,d_t)\in\mathbb{R}^7,
\]
with $r_t$ return, $\sigma_t$ realized volatility, $\delta_t$ bid-ask spread, $\nu_t$ volume, $\lambda_t$ a liquidity feature, $\eta_t$ a slippage estimate, and $d_t$ a drawdown-related feature. The HMM has initial distribution $\pi\in\Delta_{\mathcal{Z}}$, transition matrix $T_{ij}=P(z_t=j\mid z_{t-1}=i)$, and Gaussian emissions with diagonal covariance:
\[
y_t\mid z_t=j\;\sim\;\mathcal{N}(\mu_j,\,\mathrm{diag}(\Sigma_j)),\qquad j\in\mathcal{Z}.
\]
The emission family is interface-local: substitutes (Gaussian mixtures, Student-$t$, heavy-tailed) do not change the authority interface in Definition~\ref{def:hmm-witness}.

\begin{definition}[HMM regime witness]\label{def:hmm-witness}
The regime witness at time $t$ is the triple
\[
\mathcal{W}^{hmm}_t=(b_t,H(b_t),\mathrm{fresh}^{hmm}_t),
\]
with filtered posterior $b_t(j)=P(z_t=j\mid y_{1:t})$, normalized entropy $H(b_t)=-\frac{1}{\log|\mathcal{Z}|}\sum_j b_t(j)\log b_t(j)$, and freshness indicator $\mathrm{fresh}^{hmm}_t=\mathbf{1}[t-\mathrm{ts}(y_t)\leq h^{hmm}]$, where $\mathrm{ts}(y_t)$ is the wall-clock time of the most recent observation consumed by the filter.
\end{definition}

The provider computes $b_t$ with the standard forward recursion and fits $\Theta=(\pi,T,\{\mu_j,\Sigma_j\}_{j\in\mathcal{Z}})$ by Baum--Welch on a rolling window $\mathcal W_t\subset(-\infty,t)$. Declared state-order constraints and overlap-window alignment keep labels stable across refits. These estimation choices are local to the witness provider: the authority interface consumes only $(b_t,H(b_t),\mathrm{fresh}^{hmm}_t)$ and never a smoothed posterior.

\subsubsection{Causal filtration and walk-forward soundness}

Let $\mathcal{F}_t=\sigma\bigl(y_{1:t},\hat\Theta_t\bigr)$ be the filtration generated by observations through time $t$ and by the rolling estimator available at $t$.

\begin{proposition}[Causal filtering]\label{prop:causal}
$b_t$ and $H(b_t)$ are $\mathcal{F}_t$-measurable. Smoothed posteriors $P(z_t\mid y_{1:T})$ for any $T>t$ are not used by the authority controller in fixture evaluation. Hence each evaluated authority decision at time $t$ is $\mathcal{F}_t$-measurable.
\end{proposition}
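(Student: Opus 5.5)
The plan is to prove the three claims of Proposition~\ref{prop:causal} in order: measurability of $b_t$, measurability of $H(b_t)$, and measurability of every evaluated authority decision. The first two come from the structure of the forward recursion. The third comes from composing measurable maps through Definition~\ref{def:controller} and Definition~\ref{def:vld}.

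\textbf{Step 1: measurability of $b_t$.} I will show by induction on $t$ that the filtered posterior is a Borel function of $(y_{1:t},\hat\Theta_t)$. For the base case, $b_1(j)\propto \pi_j\,\mathcal N(y_1;\mu_j,\mathrm{diag}(\Sigma_j))$, with $\pi$ and the emission parameters taken from $\hat\Theta_t$. For the inductive step, the forward update is
\[
b_s(j)\propto \mathcal N(y_s;\mu_j,\mathrm{diag}(\Sigma_j))\sum_i b_{s-1}(i)T_{ij}.
\]
This is a continuous map of $(b_{s-1},y_s,\hat\Theta_t)$ wherever the normalizer is positive, and the normalizer is positive because Gaussian densities with positive diagonal covariances are strictly positive. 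The parameters come from Baum--Welch on $\mathcal W_t\subset(-\infty,t)$, so $\hat\Theta_t$ is $\sigma(y_{1:t-1})\subseteq\mathcal F_t$-measurable by construction of $\mathcal F_t$. It follows that $b_t$ is $\mathcal F_t$-measurable.

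\textbf{Step 2: measurability of $H(b_t)$.} The normalized entropy is a continuous function on the simplex $\Delta_{\mathcal Z}$, under the convention $0\log 0=0$. Composing it with the measurable $b_t$ from Step 1 gives $\mathcal F_t$-measurability. The freshness indicator $\mathrm{fresh}^{hmm}_t$ depends only on the current clock and on $\mathrm{ts}(y_t)$. I will treat the clock as part of the trusted, $\mathcal F_t$-adapted state given by Definition~\ref{def:state} and Assumption~\ref{ass:gates}.

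\textbf{Step 3: exclusion of smoothed posteriors.} This part is a statement about the fixture evaluation procedure, not a mathematical consequence of the model. I will justify it by pointing to the interface in Definition~\ref{def:hmm-witness}: the controller consumes only the triple $(b_t,H(b_t),\mathrm{fresh}^{hmm}_t)$. A smoothed posterior $P(z_t\mid y_{1:T})$ with $T>t$ is not a coordinate of that triple, so it cannot enter $A_g^{reg}$.

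\textbf{Step 4: measurability of authority decisions.} I will argue that
\[
A_{g,t}=A_g^{base}(c_t^\star)\wedge_g A_g^{cov}(D_t)\wedge_g A_g^{state}(\xi_t)\wedge_g A_g^{reg}(b_t)
\]
is a composition of functions of $\mathcal F_t$-measurable inputs, and that $\mathsf{Gate}_g$ is an indicator built from these inputs together with receipt bindings and hash comparisons on finite objects. Definition~\ref{def:state} already requires gates to use only $\mathcal F_t$-measurable state. The compiler $\Gamma$ is deterministic, so $c_t^\star$ and $D_t$ inherit measurability. The meet $\wedge_g$ on the finite lattice $\mathcal A_g$ is measurable when $\mathcal A_g$ carries the discrete $\sigma$-algebra. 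Each term $A_g^{(\cdot)}$ maps into this finite set, so it suffices that every preimage of a cap is measurable. I will take that as part of the registered contract: the terms are threshold rules on $(b_t,H(b_t))$ and on state coordinates.

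\textbf{Main obstacle.} I expect Step 4 to be the hardest. Measurability there is essentially assumed through the declared $\mathcal F_t$-measurability of $\xi_t$ and the registered measurable terms, so the conclusion is only as strong as those interface assumptions. I will state it explicitly as conditional on them, rather than claiming it unconditionally.
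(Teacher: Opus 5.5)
Your proposal is correct and follows the paper's argument: the paper's sketch makes the same two points you make in Step 1, namely that $b_t$ is an iterated composition of the forward update in $(b_{t-1},y_t,\hat\Theta_t)$, and that the rolling-window constraint $\max\mathcal W_t<t$ keeps $\hat\Theta_t$ free of future observations; your entropy, interface, and controller-composition steps only spell out what the paper leaves implicit. One caution for Step 4: the $\mathcal F_t=\sigma(y_{1:t},\hat\Theta_t)$ defined just before the proposition does not generate receipts, dialogue, or portfolio state, so the final claim holds only for the broader runtime filtration of Definition~\ref{def:state}, and only if that filtration contains $\sigma(y_{1:t},\hat\Theta_t)$ (an assumption you should state). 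Say this explicitly instead of moving silently between the two filtrations.
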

\begin{proof}[Proof sketch]
The forward recursion writes $b_t$ as a deterministic function of $b_{t-1}$, $y_t$, and $\hat\Theta_t$, all $\mathcal{F}_t$-measurable; iterated composition stays in $\mathcal{F}_t$. The rolling-window constraint $\max\mathcal{W}_t<t$ ensures $\hat\Theta_t$ uses no future observations.
\end{proof}

This is the formal walk-forward statement: regime witnesses cannot import future observations into past authority computations, and rolling re-fits cannot peek beyond $t$.

\subsubsection{Regime-to-authority map}

The HMM is an optional contractive witness for the execution adapter, not an expected-return input. Let $\mathcal Q_{\mathsf{exec}}$ be the finite execution-size buckets and $\mathcal M_{\mathsf{exec}}=\{\bot,\mathrm{diag},\mathrm{paper},\mathrm{live\_small}\}$ the execution-mode chain. For an executable candidate the size cap factorizes
\[
q_{\mathsf{exec},t}^{max}=\operatorname{bucket}_{\mathcal Q_{\mathsf{exec}}}\!\left(q_0(c_t^\star)\alpha(p_t)\rho(\mathcal B_t)\beta(D_t)\gamma(b_t)\right),
\]
with contraction multipliers $\alpha(p_t),\rho(\mathcal B_t),\beta(D_t),\gamma(b_t)\in[0,1]$ for portfolio stress, reserved budget or exposure on the consumed scope, coverage debt, and regime risk. The bucket map returns the greatest declared size bucket not exceeding the raw cap. Let $r_t=\mathbf 1[b_t(S)>\theta_S\lor b_t(I)>\theta_I\lor H(b_t)>\theta_H]$. Matching the executable prototype and reference evaluator, we instantiate $\gamma$ as a fail-closed threshold map:
\[
\gamma(b_t)=
\begin{cases}
0, & r_t=1,\\
1, & r_t=0,
\end{cases}
\]
and the allowed mode is contracted symmetrically:
\[
M_t\preceq
\begin{cases}
\mathrm{diag}, & \mathrm{fresh}^{hmm}_t=0\ \text{or}\ r_t=1,\\
\mathrm{live\_small}, & \text{otherwise}.
\end{cases}
\]

\begin{proposition}[Monotone size and mode contraction]\label{prop:hmm-monotone}
Let
\[
q^{max}(p,\mathcal B,D,b)=
\operatorname{bucket}_{\mathcal Q_{\mathsf{exec}}}\!\left(q_0(c)\alpha(p)\rho(\mathcal B)\beta(D)\gamma(b)\right),
\]
where $\alpha$ is non-increasing in the declared portfolio-risk order, $\rho$ is non-increasing in reserved budget or exposure on the consumed scope, $\beta$ is non-increasing in coverage-debt inclusion and severity, and $\gamma$ is the thresholded regime contraction above. If $p,\mathcal B,D,b$ are weakly no safer than $p',\mathcal B',D',b'$ under these orders, then
\[
q^{max}(p,\mathcal B,D,b)\leq q^{max}(p',\mathcal B',D',b').
\]
Let $M(b,f)$ denote the mode cap above, with $f=1$ fresh and $f=0$ stale. If $f\leq f'$, $b(S)\geq b'(S)$, $b(I)\geq b'(I)$, and $H(b)\geq H(b')$, then
\[
M(b,f)\preceq M(b',f')
\]
in $\mathcal M_{\mathsf{exec}}$. Thus worse portfolio risk, greater reservations, greater coverage debt, higher stress or illiquidity posterior, or higher entropy cannot increase either the size cap or the allowed mode; freshness loss cannot increase the mode.
\end{proposition}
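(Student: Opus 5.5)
The plan is to prove the two claims separately, each by reducing to monotonicity of elementary pieces and then composing.

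\textbf{Size cap.} Assume $p,\mathcal B,D,b$ are weakly no safer than $p',\mathcal B',D',b'$. Then the hypotheses give, factor by factor,
\[
\alpha(p)\le\alpha(p'),\quad \rho(\mathcal B)\le\rho(\mathcal B'),\quad \beta(D)\le\beta(D').
\]
For $\gamma$, I would first make precise what ``weakly no safer'' means for $b$: it means $b(S)\ge b'(S)$, $b(I)\ge b'(I)$ and $H(b)\ge H(b')$, the same coordinatewise order used in the mode clause. Each of the three threshold predicates in $r$ is then monotone, so $r(b')=1$ implies $r(b)=1$. Hence $r(b)\ge r(b')$, and therefore $\gamma(b)\le\gamma(b')$, since $\gamma$ takes only the values $0$ and $1$.

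All four factors lie in $[0,1]$ and are nonnegative, and $q_0(c)\ge0$ is fixed. Multiplying the factorwise inequalities therefore preserves order, so the raw product for the unprimed state is at most that for the primed state. It remains to check that $\operatorname{bucket}_{\mathcal Q_{\mathsf{exec}}}$ is order-preserving. Because it returns the greatest declared bucket not exceeding its argument, enlarging the argument can only enlarge the admissible set of buckets, so its maximum cannot decrease. This gives the size inequality.

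\textbf{Mode cap.} Here I would argue by cases on the primed state, using the mode cap at face value as the value of the two-case map: $\mathrm{diag}$ in the first case and $\mathrm{live\_small}$ otherwise.
\begin{itemize}
\item If $M(b',f')=\mathrm{live\_small}=\top$ in the chain, the inequality $M(b,f)\preceq\top$ is trivial.
\item Otherwise $M(b',f')=\mathrm{diag}$, so $f'=0$ or $r(b')=1$.
 \begin{itemize}
 \item If $f'=0$, then $f\le f'$ forces $f=0$.
 \item If $r(b')=1$, the predicate monotonicity shown above gives $r(b)=1$.
 \end{itemize}
 In either subcase $M(b,f)=\mathrm{diag}$, and the inequality holds with equality.
\end{itemize}
Since $\mathcal M_{\mathsf{exec}}$ is a chain, these cases are exhaustive. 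The final sentence of the proposition is a restatement of these two inequalities, with freshness affecting only the mode clause.

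\textbf{Main obstacle.} The delicate point is not the algebra but the interface. The proposition fixes the order on $b$ explicitly only in the mode clause, while the size clause says only ``weakly no safer''. I would state up front that the regime order is the coordinatewise one on $(b(S),b(I),H(b))$ and use it in both clauses. I would also note that monotonicity of $\gamma$ depends on every threshold predicate being upward-closed. A regime map that is non-monotone in, for example, $b(N)$ would break the claim.
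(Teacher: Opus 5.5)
Your proof is correct and follows the same route as the paper's sketch: each nonnegative multiplier is monotone factor by factor, the product and the bucket map preserve order, and a direct case analysis on the thresholds and freshness handles the mode cap. You spell out three points the sketch leaves implicit: the coordinatewise order on $(b(S),b(I),H(b))$ for the size clause, the monotonicity of $\gamma$ via upward-closed threshold predicates, and the requirement $q_0(c)\ge 0$.
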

\begin{proof}[Proof sketch]
Each multiplier is non-increasing in its corresponding risk order, and the product of nonnegative non-increasing multipliers is non-increasing in the product order. The bucket map is monotone, so bucketization preserves the inequality. The mode map is piecewise constant, non-decreasing in freshness $f$ where $1$ denotes fresh, and non-increasing in stress, illiquidity, and entropy. Direct case analysis on the thresholds gives the mode inequality.
\end{proof}

\subsubsection{State-contingent receipt revocation}

An execution receipt issued at $t_0$ may bind thresholds $(\theta_S^R,\theta_I^R,H_R)$ inside its consumption-time state predicate $\chi_R$, defining the authorized regime set
\[
\mathcal{B}_R=\{b\in\Delta_{\mathcal{Z}}:b(S)\leq\theta_S^R,\;b(I)\leq\theta_I^R,\;H(b)\leq H_R\}.
\]
The gate then yields the following revocation result.

\begin{theorem}[State-contingent revocation]\label{thm:revoke}
Let $R^{\mathsf{execAuth}}$ bind thresholds $(\theta_S^R,\theta_I^R,H_R)$, expiry $t_e$, and a fresh regime witness. If at consumption any of
\[
b_t(S)\leq\theta_S^R,\quad b_t(I)\leq\theta_I^R,\quad H(b_t)\leq H_R,\quad t\leq t_e,\quad \mathrm{fresh}^{hmm}_t=1
\]
fails, the execution gate rejects the receipt.
\end{theorem}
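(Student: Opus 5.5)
The plan is to reduce the theorem to the validity predicate $\mathsf{vld}_{\mathsf{exec}}$ of Definition~\ref{def:vld}, using the fact that the gate is a conjunction. Acceptance requires $\mathsf{vld}_{\mathsf{exec}}(x_t,c_t^\star,R,\xi_t,t)=1$. Validity in turn requires both $t_i\le t\le t_e$ and $\chi_R(\xi_t)$ to hold. So it suffices to show that each listed failure falsifies one of these two conjuncts. The argument is then a five-way case split, and in each case $\mathsf{Gate}_{\mathsf{exec}}=0$ follows because one conjunct is false.

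First I will make the binding explicit. By hypothesis, $R^{\mathsf{execAuth}}$ binds the thresholds and a fresh regime witness inside its consumption-time predicate, as described just before the theorem. I will write this as
\[
\chi_R(\xi_t)\;\Rightarrow\;\bigl(b_t\in\mathcal B_R\bigr)\wedge\bigl(\mathrm{fresh}^{hmm}_t=1\bigr),
\]
where $b_t$ and $\mathrm{fresh}^{hmm}_t$ are read off the $\mathcal F_t$-measurable state. Proposition~\ref{prop:causal} guarantees that this read uses only causal information, so the predicate is evaluated at consumption time $t$ and not at issuance time $t_0$.

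The case analysis then runs as follows.
\begin{enumerate}
\item Suppose $b_t(S)>\theta_S^R$, $b_t(I)>\theta_I^R$, or $H(b_t)>H_R$. Then $b_t\notin\mathcal B_R$ by the definition of $\mathcal B_R$, so $\chi_R(\xi_t)$ is false.
\item Suppose $\mathrm{fresh}^{hmm}_t=0$. The bound freshness conjunct fails, so $\chi_R(\xi_t)$ is false.
\item Suppose $t>t_e$. The time-window clause $t_i\le t\le t_e$ of $\mathsf{vld}$ fails directly.
\end{enumerate}
In every case $\mathsf{vld}_{\mathsf{exec}}=0$, so the gate rejects. This holds regardless of the cap, debt, and effect conjuncts, and regardless of whether the receipt was valid at $t_0$.

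The main obstacle is the modeling step rather than any computation. Definition~\ref{def:receipt} leaves $\chi_R$ abstract, so I have to state precisely that ``binds a fresh regime witness and thresholds'' means these atoms appear conjunctively in $\chi_R$. It must also be clear that the witness entering $\chi_R$ is the one current at $t$ (supplied by trusted state feeds under Assumption~\ref{ass:gates}), not a snapshot cached at issuance. I would state this as an explicit reading of the hypothesis.

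Optionally, I would also note a second route for the regime atoms. The mode contraction preceding Proposition~\ref{prop:hmm-monotone} forces $A_{\mathsf{exec},t}\preceq\mathrm{diag}$ when $r_t=1$ or the witness is stale, which can also break the cap conjunct for a live request. The proof should not rely on this route, because the receipt thresholds $\theta^R$ may differ from the controller thresholds $\theta$.
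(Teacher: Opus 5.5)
Your proposal is correct and follows essentially the same route as the paper: each listed condition is bound either into $\chi_R$ or into the receipt's window $t_i\le t\le t_e$. Its failure therefore makes $\mathsf{vld}_{\mathsf{exec}}=0$ and hence $\mathsf{Gate}_{\mathsf{exec}}=0$, with Proposition~\ref{prop:causal} invoked only to ensure the regime test is $\mathcal F_t$-measurable; your explicit reading of ``binds'' as conjunctive atoms of $\chi_R$, evaluated on the witness current at $t$, states precisely what the paper's sketch leaves implicit.
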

\begin{proof}[Proof sketch]
Each condition is bound into $\chi_R$ or the receipt time interval. Definition~\ref{def:vld} requires both at consumption; failure makes $\mathsf{vld}_{\mathsf{exec}}=0$. Proposition~\ref{prop:causal} ensures the regime test uses only $\mathcal F_t$.
\end{proof}

\begin{remark}[Reauthorization, not repair]
A revoked receipt is not repaired by attaching new evidence or a new entropy bound. The candidate must be re-evaluated by the controller under the current state and, if feasible, a new $R^{\mathsf{execAuth}}$ issued. This preserves the soundness theorem (Theorem~\ref{thm:exec-sound}) under regime transitions.
\end{remark}

\paragraph{Evaluation use.}

The HMM component is evaluated by gate behavior on stored regime states, not market prediction or regime-estimation accuracy. Tests check whether stressed, illiquid, or extreme-entropy states contract execution caps, reject stale witnesses, and revoke incompatible receipts. A falsifying result is an execution authorization that remains valid after a bound condition in Theorem~\ref{thm:revoke} fails.

\subsection{Policy: Dialogue to Deterministic Risk-Control IR}
\label{sec:risk-policy}

Dialogue can reveal risk-control intent, but dialogue is not deployable policy. A statement such as ``block strategies when drawdown exceeds five percent'' leaves open the scope, event type, witness freshness, fallback, effect, owner, expiry, and rollback semantics. Different models may interpret the same instruction differently.

SAGE-Fin treats dialogue-derived policy content as untrusted proposal material. A minimal financial risk-control policy IR contains scope, trigger, deterministic conditions, required witnesses, effect, fallback, owner, validity window, and rollback pointer. The runtime adapter enforces only policies with valid deployment receipts. Candidate, validation, fixture, and shadow-mode receipts support audit and promotion decisions, but they do not authorize enforcement.

\textbf{Worked example: dialogue $\to$ IR $\to$ fixture $\to$ shadow $\to$ deploy.} Consider the utterance ``block strategies when drawdown exceeds five percent.'' The compiler $\Gamma$ emits a \emph{policy candidate} with $\mathrm{scope}=\texttt{all\_strategies}$, $\mathrm{trigger}=\texttt{drawdown}$, $\mathrm{threshold}=\texttt{5\%}$, $\mathrm{effect}=\texttt{block}$, and leaves several fields unresolved: which drawdown window, which witness substantiates drawdown, what fallback applies if the witness is stale, and what rollback path handles a false trigger? Each unresolved field enters coverage debt $D_t$. Resolution proceeds through deterministic IR validation, fixture replay, a declared non-enforcing shadow evaluation, registry promotion, and owner-approved deployment. Validation, fixture, and shadow receipts record progress but remain outside $\mathsf{Accept}(\mathsf{policy})$; only a $\mathsf{deploy}$ receipt bound to the exact policy artifact and its promotion evidence can authorize enforcement. Fluent dialogue alone never produces deployment authority.

This is one financial instantiation of the same authority principle: dialogue is not policy authority, signal is not trade authority, and model output is not runtime authority.

\section{Prototype and Evaluation}
\label{sec:eval}

Evidence comes from four separately interpreted sources: a reference catalog; prototype tests and catalog-wide cross-implementation parity; three distinct production failures from the predecessor response workflow with second-analyst confirmation; and later bounded operation of the SAGE-Fin response gate with a formal implementation-independent operational review. The catalog tests decision conformance and clause reach; parity tests mapped implementation behavior on that same authored catalog; the historical episodes and second-analyst confirmation of 0/3 predecessor interception ground the failure modes; and deployment, a strongly positive formal review independent of implementers, and strongly positive end-user feedback provide qualitative field corroboration of practical usefulness and workflow fit, while no disclosure-approved aggregate effectiveness estimate is reported. Section~\ref{sec:replay} specifies a prospective aggregate replay, which contributes no result here.

\subsection{Prototype Slice}
\label{sec:eval-prototype}

We implement a local SAGE-Fin slice with a market-claim JSON schema, a dynamic budget allocator over regime, drawdown, exposure, symbol caps, and freshness, a gate with approved, qualified, escalated, and rejected outcomes, a budget workflow (quote, reserve, confirm hold, release, authorize), evidence and budget ledgers, non-execution reservation-hold records, workflow metrics, and auditable local records. Illustrative cases---an AAPL momentum candidate rejected at final authorization despite a held reservation and an MSFT small-trade fixture approved only under compatible budget state---are summarized in Tables~\ref{tab:workflow} and~\ref{tab:budget-sweep}.

Twenty-two automated tests exercise selected wrong-type receipt, candidate and exact-instruction binding, issue-to-consumption, budget, reconciliation, regime, policy-revocation, active-incident, entropy, and deploy-only paths; all pass. The execution adapter recomputes the current candidate hash rather than trusting the issuance candidate, and policy deployment fails closed without passing validation, fixture replay, shadow evaluation, approval, and registry-promotion evidence. These tests establish behavior of the tested prototype paths. Cross-implementation binary parity is evaluated separately over the authored catalog below.

The prototype workflow demonstrates non-substitutability in isolation. A live AAPL momentum candidate receives a reservation and a non-execution reservation-hold record, but final authorization rejects it because of stale evidence, profit language, missing risk controls, and missing walk-forward evidence. A live MSFT small-trade fixture is approved under normal budget with a bounded maximum position. Under stressed, drawdown-exceeded, exposure-exhausted, or tight-symbol-cap states, the live-trade fixture is rejected or downgraded.

\begin{table}[!tbp]
\caption{Budget-workflow and final-authorization outcomes.}
\label{tab:workflow}
\begingroup
\scriptsize
\setlength{\tabcolsep}{2.6pt}
\begin{tabularx}{\columnwidth}{L{0.29\columnwidth}L{0.22\columnwidth}L{0.20\columnwidth}Y}
\toprule
Candidate & Request & Budget state & Final status \\
\midrule
BTC risk & Diagnostic & Held & Approved \\
ETH fee & Fee estimate & Held & Qualified \\
AAPL momentum & Live trade & Held & Rejected \\
NVDA fit & Recommendation & Held & Escalated \\
MSFT small & Live trade & Held & Approved \\
\bottomrule
\end{tabularx}
\endgroup
\end{table}

\begin{table}[!tbp]
\caption{Prototype dynamic-budget sweep.}
\label{tab:budget-sweep}
\small
\begin{tabularx}{\columnwidth}{L{0.31\columnwidth}ccY}
\toprule
Scenario & Live status & Max pos. & Main reason \\
\midrule
Normal budget & Approved & 0.50\% & -- \\
Tight symbol cap & Rejected & 0.10\% & Position exceeds budget \\
Exposure exhausted & Rejected & 0.00\% & Risk budget exhausted \\
Drawdown exceeded & Rejected & 0.00\% & Drawdown exceeded \\
Stressed market & Rejected & 0.00\% & Stressed regime \\
\bottomrule
\end{tabularx}
\end{table}

\FloatBarrier

\subsection{Executable Conformance Study}
\label{sec:eval-mechanism}

\textbf{What the catalog measures.} A separate reference evaluator runs deterministic diagnostic specifications and ablations over structured inputs, not language-model outputs. Each case provides a candidate, issuance and consumption states, witnesses, receipts, and authored expected decisions. Agreement is therefore specification conformance. Approvals within the full-policy rejection set locate clauses absent from each specification; they are not independently labeled errors, production leakage, or model performance.

The catalog contains $N{=}616$ authored cases: 28 curated fixtures spanning market-commitment compilation (12), receipt transition under state change (8), and policy-IR deployment (8); a 12-cell execution/policy receipt-type submatrix; and 576 generated fixtures crossing freshness, regime, regime entropy, drawdown, position size, and language. Three named cases traverse the prototype response-gate interface; none replays production traffic or tests deployed-code parity. Five deterministic specifications run on every case, yielding $616 \times 5 = 3{,}080$ case--specification outputs. The 29 full-policy approvals exercise permissive branches. Production episodes remain outside this catalog.

Table~\ref{tab:baselines} identifies the five reference specifications evaluated on the authored catalog; the first three are diagnostic proxies and the final two are SAGE-Fin ablations. No LLM baseline or aggregate production-traffic comparator outputs are reported; the three case-conditioned predecessor misses are analyzed separately in Section~\ref{sec:eval-empirical}.

\begin{table}[!tbp]
\caption{Reference specifications evaluated on the authored catalog.}
\label{tab:baselines}
\scriptsize
\begin{tabularx}{\columnwidth}{L{0.34\columnwidth}Y}
\toprule
Specification & Purpose \\
\midrule
Surface proxy & Tests whether natural-language instructions alone constrain unauthorized authority transitions. \\
Static checklist & Tests non-dynamic rule enforcement without receipt validity or reconciliation. \\
Language-control proxy & Checks prohibited wording and evidence presence; it does not reproduce the CBEA selector or full LCV pipeline~\cite{tang2025recall}. \\
Issuance-only SAGE-Fin & Tests typed artifacts and receipts without state-contingent consumption checks. \\
Full SAGE-Fin evaluator & Tests dynamic authority, non-substitutable receipts, adapter gates, and reconciliation. \\
\bottomrule
\end{tabularx}
\end{table}

\begin{table}[!tbp]
\caption{Designed decision contrasts on $N{=}616$ authored cases. Counts show approvals within the 587 cases that the full policy defines as non-approvable; they are mechanism contrasts, not accuracy estimates.}
\label{tab:conformance}
\begingroup
\scriptsize
\setlength{\tabcolsep}{3pt}
\begin{tabularx}{\columnwidth}{Ycc}
\toprule
Specification & \shortstack{Approvals\\/ 587} & Rate \\
\midrule
Surface proxy & 296/587 & 50.4\% \\
Static checklist & 294/587 & 50.1\% \\
Language-control proxy & 296/587 & 50.4\% \\
Issuance-only SAGE-Fin & 61/587 & 10.4\% \\
Full SAGE-Fin evaluator & 0/587 & 0.0\% \\
\bottomrule
\end{tabularx}
\endgroup
\end{table}

All 3,080 reference-evaluator binary decisions match their authored expected decisions. Because rules and labels are authored together, this is conformance rather than independent safety accuracy. On the 587 cases the full policy defines as non-approvable, the surface proxy, static checklist, language-control proxy, and issuance-only SAGE-Fin specifications approve 296, 294, 296, and 61 cases. The full row is zero by construction against its own rejection set. Of all 616 cases, 526 fail the issuance-only specification and 90 enter full evaluation; 87 contain explicit consumption-state objects, while three exercise diagnostic or receipt-type acceptance without one. Of the 90, 29 remain approved and 61 expose full-versus-issuance distinctions. Table~\ref{tab:seam-isolation} isolates four designed mechanism contrasts within this authored catalog.

\begin{table}[!htbp]
\caption{Designed seam-isolation contrasts in the authored reference catalog. These are mechanism checks, not independently labeled accuracy estimates.}
\label{tab:seam-isolation}
\centering
\scriptsize
\setlength{\tabcolsep}{2.5pt}
\begin{tabularx}{\columnwidth}{L{0.27\columnwidth}L{0.29\columnwidth}Y}
\toprule
Isolated seam & Matched contrast & Reference outcome \\
\midrule
Consumption-time revalidation & Eight named receipts valid at issuance, each followed by a bound adverse-state change & 8/8 issuance approvals become 8/8 consumption rejections \\
Receipt nominality & Twelve execution/policy receipt--adapter pairings & 2/2 correct pairings accept; 10/10 wrong pairings reject \\
Consumption filtering & Ninety cases admitted by the issuance-only specification & 29 retain approval; 61 are rejected under full evaluation \\
Typed response effect (Def.~\ref{def:candidate}) & Three named cases traverse the prototype response-gate interface & 3/3 parity: one approval and two rejections \\
\bottomrule
\end{tabularx}
\end{table}

\textbf{Reference--prototype parity.} A separate harness freshly runs the issuance-only and full reference evaluators and maps all 616 fixture inputs through the prototype's market, execution, and policy interfaces. Neither execution path reads authored labels; frozen full-policy oracles are joined afterward as a separate consistency check. Binary decisions match on 616/616 cases with zero mapping exceptions. Three named fixtures traverse the public prototype response-gate interface and match 3/3: one fresh diagnostic is approved, while a stale fee claim and personalized-advice boundary are rejected. This is public response-boundary conformance, not production traffic replay or deployed-code parity. Stratified by the fresh issuance-only reference run, 526 cases terminate at issuance and 90 reach full evaluation; the latter contain 87 explicit consumption-state objects and split into 29 approvals and 61 rejections. The complete run includes all 28 named fixtures and the 12-cell receipt matrix. Parity-harness tests freeze the catalog size, verify label isolation, exercise representative adapter paths and the candidate-bound \texttt{execAuth} positive control, and require zero mapping exceptions. Because the adapter mapping was developed against this same authored catalog, 616/616 establishes catalog-wide binary implementation parity, not independent correctness, safety accuracy, reason-code identity, general semantic equivalence, or production effectiveness.

These current results support mechanism conformance and implementation parity. The independently labeled replay needed to estimate prevalence or effect is specified after the field evidence in Section~\ref{sec:replay}.

\FloatBarrier

\subsection{Confidential Deployment and Predecessor Episodes}
\label{sec:eval-empirical}

SAGE-Fin's response-side gate later ran in the customer-facing serving path of a confidential digital-asset platform and processed real production requests. A formal post-deployment review by an operational team organizationally independent of the implementation team reached a strongly positive conclusion on practical usefulness and workflow fit; end-user feedback was likewise strongly positive. Only the review's independence, stakeholder classes, assessed dimensions, and directional conclusion are disclosure-approved. The formal review therefore supplies implementation-independent qualitative corroboration of deployment value; neither source yields an aggregate or comparative effect estimate. This deployment evidence is separate from the three episodes below. Each episode was a distinct production failure in the predecessor response workflow before SAGE-Fin; none is presented as an online SAGE-Fin decision or blocked event. The execution and policy adapters evaluated in this paper remain prototype and reference-model results. Institutional privacy and de-identification constraints also preclude disclosure of raw dialogue, customer attributes, exact wording, timestamps, thresholds, and venue-sensitive rules.

The three episodes---not multiple stages of one interaction---are distinct predecessor-workflow failures concerning different authority-state transitions. They are the only episodes approved for de-identified academic disclosure; their number reflects the institutional disclosure boundary, not a research sampling design. They are not synthetic fixtures: only their disclosed representation is de-identified, paraphrased, and reduced to control-relevant authority-state fields. A second analyst independently confirmed that the predecessor workflow intercepted none of the three episodes (0/3). For each, we retain the observed predecessor transition and separately derive its retrospective implication under the declared SAGE-Fin response policy.

\textbf{Scenario A: cached authority across dialogue.} In one predecessor episode, a sourced SOP answer is reused after intervening account-recovery dialogue. The observed failure is that effectively unchanged text retains its earlier authority despite prior emission and an unresolved inquiry. Under the declared SAGE-Fin policy, those state changes require candidate recompilation and current dialogue witnesses before another response.

\textbf{Scenario B: stale account-scoped evidence.} In a separate predecessor episode, a generic schedule is used for an account-specific fee response although the effective fee depends on a non-default tier and the tier witness is absent or stale. The observed failure is an account-specific commitment unsupported by current account evidence. Under the declared SAGE-Fin policy, the response contracts to clarification or a generic account path until a fresh account-bound tier witness is available.

\textbf{Scenario C: terminal workflow with missing routing state.} In a third predecessor episode, a repeated restricted-jurisdiction inquiry has been reclassified into a terminal workflow, but the next transition proceeds without the required escalation-route predicate. The observed failure is progression without current routing authority. Under the declared SAGE-Fin policy, the implication is to block re-emission and use an independently declared human-review fail-safe, not to infer escalation authority from missing state.

The episodes are distinct: Scenario~A concerns stale candidate identity, Scenario~B stale account evidence, and Scenario~C missing routing state after correct reclassification. Scenarios~A and C are not stages of the same interaction. All three preserve plausible surface content while lacking authority for the predecessor action.

\begin{table}[H]
\caption{Three distinct predecessor-workflow production failures; a second analyst confirmed predecessor interception was 0/3.}
\label{tab:cs-scenarios}
\centering
\small
\setlength{\tabcolsep}{2pt}
\begin{tabularx}{\columnwidth}{L{0.17\columnwidth}L{0.27\columnwidth}L{0.24\columnwidth}Y}
\toprule
Case & Unintercepted predecessor failure & Required input & Retrospective policy implication \\
\midrule
Repeated SOP & Reused earlier response authority & Fresh type; prior-emission state & Recompile; block repeat \\
Fee quote & Account commitment from stale/generic evidence & Fresh account-tier witness & Clarify or use generic path \\
Terminal SOP & Progressed without route state & Escalation-route predicate & Block; use human fail-safe \\
\bottomrule
\end{tabularx}
\end{table}

\paragraph{Case abstraction and de-identification.}
The episodes are structured, non-representative abstractions. We retain only the predecessor transition, requested commitment, changed authority input, and retrospective policy implication; time is relational, account state is bucketed as present, stale, or missing, and dialogue is reduced to prior-emission, terminal-state, and escalation-witness fields. Source interactions are paraphrased, and raw messages, identifiers, KYC records, session tokens, exact timestamps and thresholds, internal policy text, platform-specific control names, and production paths are excluded. This preserves the authority-state distinction without disclosing customer- or institution-sensitive content.

\paragraph{Evidence boundary and replay unit.}
The live deployment documents bounded in-situ use; the formal review organizationally independent of implementers and end-user feedback strongly corroborate practical usefulness and workflow fit. This field evidence is not linked to the three historical episodes. The second-analyst 0/3 confirmation establishes case-level predecessor non-interception for the disclosure-approved failures. Because institutional approval limits disclosure to three episodes, this evidence does not estimate a production-wide rate. Their SAGE-Fin actions remain analytic counterfactual mappings, not logged SAGE-Fin outputs or comparative results, and estimate neither prevalence nor causal effect. A future privacy-preserving replay would pair independent labels with replayed gate outputs and a minimum row containing an anonymized case identifier, bucketed time, event and adapter type, candidate type, requested authority, coarse market and account state, dialogue state, witness status, actual outcome, human-or-legacy reason code, expected gate outcome, mismatch type, and bucketed latency. Public conformance, predecessor episodes, qualitative deployment feedback, and any future replay retain separate units and denominators.

\FloatBarrier

\subsection{Prospective Aggregate Replay}
\label{sec:replay}

The three predecessor episodes are qualitative retrospective reconstructions with second-analyst confirmation of predecessor non-interception (0/3). The separate later deployment has no linked aggregate outcomes, but a strongly positive formal post-deployment review organizationally independent of implementers and strongly positive end-user feedback provide distinct qualitative field corroboration of deployment value. Estimating prevalence or comparative interception requires a separately approved, privacy-preserving replay with independent labels and replayed gate outputs; causal effect requires a separate design. The following protocol is prospective and contributes no result in this paper. Table~\ref{tab:experiment-design} prespecifies six invariant families and the observation that would falsify each one; trace-reconstruction rate, predecessor-link completeness, hash mismatches, unexplained decisions, false blocks, and latency remain future measurements.

\begin{table}[H]
\caption{Prospective replay invariants and falsifiers. This protocol contributes no result in the current study.}
\label{tab:experiment-design}
\scriptsize
\begin{tabularx}{\columnwidth}{L{0.30\columnwidth}Y}
\toprule
Invariant & Mechanism check and falsifying result \\
\midrule
Receipt non-substitutability & Reject every nominal receipt type outside $\mathsf{Accept}(\mathsf{exec})$; falsified if a response commitment or progress receipt authorizes execution. \\
State-contingent invalidation & Recheck at consumption; falsified if execution proceeds after a bound drawdown, expiry, freshness, policy, regime, or incident condition fails. \\
Coverage-debt contraction & Missing or stale required witnesses reduce or block authority; falsified if a candidate with blocking debt receives adapter authority. \\
Dynamic budget and reservations & Open reservations and risk state constrain later caps; falsified if authority expands under an adverse-state transition or ignores an in-scope reservation. \\
Reconciliation feedback & Post-execution deviation updates future state; falsified if an overfill, slippage, or scope mismatch leaves registered feedback unchanged. \\
Policy deployment gate & Dialogue-derived policy requires deterministic IR and a deploy receipt; falsified if a candidate or progress receipt authorizes enforcement. \\
\bottomrule
\end{tabularx}
\end{table}

The target schema covers risk-control events, order or strategy-action traces, policy triggers, manual reviews, stale evidence, drawdown and exposure state, blocked or downgraded actions, rollback records, and timestamped state changes. Its anonymization rules, bucket definitions, exclusions, and baseline interfaces are maintained internally for a future approved replay; they are not a supplementary artifact in this submission. Each table row has a corresponding falsifier tied to Theorems~\ref{thm:nonsub}, \ref{thm:revoke}, and~\ref{thm:pol-sound} or Propositions~\ref{prop:cov-monotone} and~\ref{prop:contraction}.

\FloatBarrier

\section{Discussion}

\paragraph{Text and authority.}
Scenario~A instantiates the surface--authority separation in Remark~\ref{rem:surface-authority}: effectively unchanged SOP text is considered after dialogue history changes its candidate type and required witnesses. Reauthorization can therefore fail without any lexical change. The same distinction applies when an unchanged fee statement, risk note, order intent, or policy proposal is consumed under new freshness, portfolio, reservation, incident, policy, or user state. The predecessor episodes illustrate production failure transitions, not observed SAGE-Fin outcomes.

\paragraph{Evidence progress without authority escalation.}
SAGE-Fin separates progress in assembling evidence from permission to cause an effect. Evidence, budget, validation, fixture, and shadow receipts can record completed work or satisfy registered obligations, but adapter acceptance still depends on the adapter's nominal receipt type, exact-artifact binding, and current validity predicates. Proposition~\ref{prop:upstream-nonescalation} makes the handoff invariant explicit: upstream evidence activation or language validation may reduce coverage debt, yet cannot compose transitively into downstream permission. A budget record is not order authorization, and a validated policy is not deployment authorization. Missing witnesses remain visible as coverage debt; state-invalid receipts require re-issuance; and partial work can be retained without silently increasing authority.

\paragraph{Transfer across financial workflows.}
The authority coordinates are reusable, while witness registries and admissibility predicates remain institution- and product-specific. In brokerage, a market note becomes a recommendation only with suitability and communication witnesses. In derivatives, an option-chain snapshot supports a quote only within declared freshness and volatility bounds. In lending, a credit-line snapshot supports pre-approval only with current account state. In risk control, a policy proposal reaches enforcement only after deterministic IR, fixture and shadow checks, and a deployment receipt. These examples instantiate the same interface; they are not empirical claims of cross-domain performance. Field evidence in this paper is limited to the response adapter.

\section{Limitations}
\label{sec:limitations}

SAGE-Fin governs authority, not factual truth, investment return, causal alpha, or legal compliance. Its guarantees depend on its candidate compiler, witness registry, validators, and state feeds.

\textbf{Validator coverage.} Missing or unpromoted validators appear as coverage debt $D_t$ and contract authority under Proposition~\ref{prop:cov-monotone}. The runtime does not discover all relevant validators for a domain, and model-proposed validators remain candidate-only until promoted.

\textbf{Market behavior.} The execution gate enforces that any accepted instruction carries a valid \texttt{execAuth} bound to current state; it does not predict returns, establish causal alpha, or choose an optimal position. The optional HMM witness is evaluated only as a contractive execution-state input, not for regime-estimation accuracy or trading performance.

\textbf{Regulatory compliance.} SAGE-Fin provides auditable authority boundaries, receipt-gated execution, and deterministic policy enforcement. Whether those primitives satisfy a jurisdiction or supervisor depends on the institution's controls and legal analysis; the mechanism does not certify compliance.

\textbf{Compiler and language boundary.} Inputs that $\Gamma$ cannot parse fail closed, while incomplete compiled inputs surface as coverage debt. This can increase false rejection when compiler coverage is incomplete. The linguistic quality and factual support of realized claim content remain separate obligations; CBEA+LCV~\cite{tang2025recall} addresses upstream evidence activation and pre-realization language-commitment validation, whereas SAGE-Fin addresses downstream exact-artifact adapter consumption. An upstream pass can discharge a declared obligation but cannot supply adapter authority (Proposition~\ref{prop:upstream-nonescalation}).

\textbf{Runtime trust and replay.} The trace theorem is conditional on complete mediation and a trusted compiler, issuer, registries, clocks, feeds, and artifact bindings. Effects that bypass the adapters fall outside the guarantee. Candidate and exact-artifact hashes reject cross-artifact reuse, but the current prototype does not implement the atomic spent-receipt ledger needed to prevent repeated consumption of the same receipt for the same artifact.

\textbf{Principal empirical limitation: self-authored conformance.} The rules, mappings, and oracles were developed against the same catalog; 616/616 therefore tests declared semantics rather than independent safety accuracy, generalization, or extraction fidelity for production retrievers, dialogue trackers, and intent classifiers. The catalog includes three named response-gate fixtures; the issuance-only reference evaluator stratifies 526 cases as issuance-stage terminations and 90 as reaching full evaluation, 87 with explicit consumption-state objects and three testing diagnostic or receipt-type acceptance without one. Catalog-wide agreement is not external validation, deployed-code parity, reason-code equivalence, or an independently labeled safety estimate. The three distinct predecessor failures are qualitative and form the complete disclosure-approved set; a second analyst confirmed predecessor non-interception (0/3), but their SAGE-Fin actions remain retrospective policy implications. The separate later deployment establishes production use; a formal review organizationally independent of implementers and end-user feedback strongly corroborate practical usefulness and workflow fit, but do not yield an aggregate effectiveness estimate. Execution and policy field evidence remains prototype and reference-model evidence. Independent replay, false-block and latency measures, and human-validated end-to-end inputs remain necessary.

\textbf{Data release and reproducibility.} Authored fixtures, runner outputs, and the de-identified schema are maintained for release after the anonymity period, subject to repository and venue policy. No supplementary artifact accompanies this submission. Source production records, aggregate tables, exact thresholds, and venue-sensitive rules remain governed by the originating institution unless separately approved.

\section{Conclusion}

Financial market agents fail through hallucination, factual error, and bad forecasts; they also fail when context is mistaken for authority. A signal, quote, retrieved document, model analysis, or dialogue-derived instruction can be useful and even correct without being authorized to support a claim, recommendation, policy, or trade.

SAGE-Fin addresses this failure mode through a finance-specific compilation and authority-handoff contract. It compiles proposals into typed artifacts, binds them to promoted witnesses and validators, records missing coverage as debt, computes dynamic authority under current state, issues scoped receipts, gates exact-artifact consumption, and reconciles outcomes into future authority. Under complete mediation and the stated trust boundary, the finite-trace result shows that non-consumption progress cannot create a governed effect and that every effect requires a prior exact-artifact adapter receipt valid at consumption. The result is not a profitable trading agent; it is an auditable authority mechanism.

The reference evaluator matches its declared semantics across 616 authored cases, and the prototype matches its binary decision on all 616, including 3/3 named response-gate fixtures; the issuance-only reference evaluator stratifies them into 526 issuance-stage terminations and 90 full evaluations, while 22 tests cover selected enforcement paths. Because the mapping and evaluator share the authored catalog, this is implementation parity rather than independent accuracy. Three distinct predecessor failures, with second-analyst confirmation that predecessor interception was 0/3, ground the failure taxonomy; separate later response-gate deployment, a strongly positive formal review independent of implementers, and strongly positive end-user feedback establish bounded use and provide implementation-independent qualitative corroboration of practical usefulness and workflow fit. These artifacts support mechanism conformance, independently confirmed failure grounding, and qualitative deployment value---not a comparative-effectiveness estimate or causal improvement.

\FloatBarrier
\begingroup
\interlinepenalty=10000
\bibliographystyle{ACM-Reference-Format}
\bibliography{references}
\endgroup

\end{document}